\newcommand{\shadow}[1]{}
\newcommand{\blue}[1]{\textcolor{blue}{#1}}
\newcommand{\red}[1]{\textcolor{red}{#1}}
\newcommand{\brown}[1]{\textcolor{brown}{#1}}
\newcommand{\orange}[1]{\textcolor{orange}{#1}}
\DeclareMathOperator*{\argmax}{arg\,max}
\def\b{\blue}
\def\s{\shadow}
\def\r{\red}
\def\o{\orange}
\newtheorem{theorem}{Theorem}
\newtheorem{corollary}{Corollary}[theorem]
\theoremstyle{definition}
\newtheorem{definition}{Definition}[section]
\begin{document}

\title[Article Title]{Deep Clustering with Self-Supervision using Pairwise  Similarities}


\author*[1,2]{\sur{Mohammadreza Sadeghi}}\email{mohammadreza.sadeghi@mcgill.ca}

\author[1,2]{ \sur{Narges Armanfard}}\email{narges.armanfard@mcgill.ca}

\affil*[1]{\orgdiv{Department of Electrical and Computer Engineering}, \orgname{McGill University}, \orgaddress{\street{3380 Blvd Robert-Bourassa}, \city{Montreal}, \postcode{H2X 2G6}, \state{QC}, \country{Canada}}}

\affil[2]{\orgdiv{Mila - Quebec AI Institute},  \orgaddress{\street{6666 Rue Saint-Urbain}, \city{Montreal}, \postcode{H2S 3H1}, \state{QC}, \country{Canada}}}


\abstract{Deep clustering incorporates embedding into clustering to find a lower-dimensional space appropriate for clustering. In this paper, we propose a novel deep clustering framework with self-supervision using pairwise similarities (DCSS). The proposed method consists of two successive phases. In the first phase, we propose to form hypersphere-like groups of similar data points, i.e. one hypersphere per cluster, employing an autoencoder that is trained using cluster-specific losses. The hyper-spheres are formed in the autoencoder's latent space. In the second phase, we propose to employ pairwise similarities to create a $K$-dimensional space that is capable of accommodating more complex cluster distributions, hence providing more accurate clustering performance. $K$ is the number of clusters. The autoencoder's latent space obtained in the first phase is used as the input of the second phase. The effectiveness of both phases is demonstrated on seven benchmark datasets by conducting a rigorous set of experiments. The DCSS code is available: \href{https://github.com/Armanfard-Lab/DCSS}{{\underline{https://github.com/Armanfard-Lab/DCSS}}}}

\keywords{Deep clustering, autoencoder, pairwise similarity, clustering with soft assignments, cluster-specific loss. }



\maketitle

\section{Introduction}\label{sec:introduction}

Many science and practical applications, information about category (aka label) of data samples is non-accessible or expensive to collect. Clustering, as a major data analysis tool in pattern recognition and machine learning, endeavors to gather essential information from unlabeled data samples.  The main goal of clustering methods is to partition data points based on a similarity metric.

Deep learning-based clustering methods have been widely studied, and their effectiveness is demonstrated in many applications such as image segmentation \cite{ijcnn14}, social network analysis \cite{ijcnn15}, face recognition \cite{ijcnn16,zhu2022local}, and machine vision \cite{ijcnn17,kong2022human}. The common practice in these methods is to map the original feature space onto a lower dimensional space (aka latent space) in which similar samples build data groups that can be detected by a simple method like k-means \cite{kmeans}. 

One of the most common approaches in obtaining the lower dimensional space is based on autoencoder (AE) and its variations \cite{ijcnn18,ijcnn19,ijcnn20,haseeb2022autoencoder,icip}. An AE consists of two networks: an encoder and a decoder. The encoder maps the original input space onto a latent space while the decoder tries to reconstruct the original space using the encoder's output space. Encoder and decoder networks are trained to minimize a loss function that contains the data reconstruction losses. The AE's latent space, whose dimension is much lower than the dimension of the original input space, is indeed a nonlinear transformation of the original space.

Some more advanced AE-based clustering methods, e.g. \cite{dkm,dcn,idec,dsc,idecf}, include in their loss function the data clustering losses besides the reconstruction losses. This makes the AE's latent space more effective for data clustering.
Despite the reconstruction loss that can be directly computed based on the difference between the encoder's input and the decoder's output, calculating the \emph{true} clustering loss is impossible due to the unsupervised nature of the clustering problem where the true cluster label of the data points remains unknown during the training phase.
Hence, researchers employ an \emph{approximation} of the clustering losses when training the networks. At each training iteration, the approximation is calculated based on the data distribution in the latent space obtained in the previous iteration.

\begin{figure}[t]
  \centering
  \includegraphics[width=0.85\linewidth]{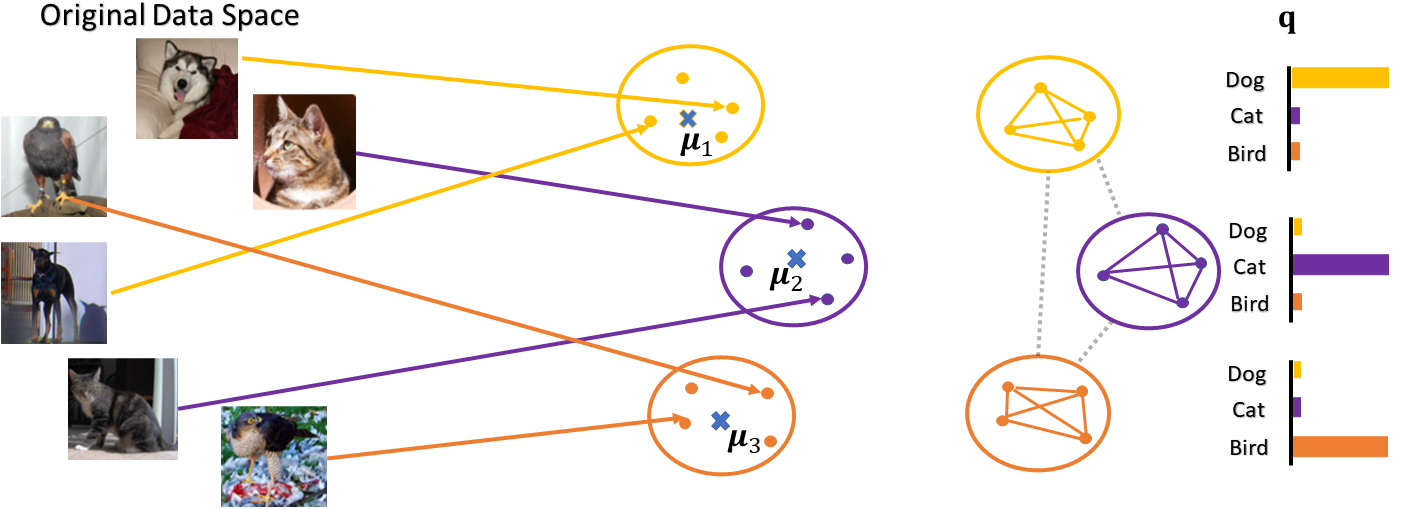}
\caption{The motivation of the proposed DCSS method. Arrows show the nonlinear mapping, using the AE, from the original input space to the AE's latent space (i.e. the $\mathbf{u}$ space).\s{ In the first phase of DCSS, we aim to gather the data points near their corresponding group centers in the $\mathbf{u}$ space. In the second phase,} DCSS employs pairs of similar and dissimilar samples to create the $K$-dimensional space $\mathbf{q}$ in which pairwise similarities and dissimilarities are strengthened. Similar samples are connected with solid lines, and dashed lines represent dissimilar data.\s{ Data points within the same cluster are connected with the same color, and dashed lines represent dissimilarities between different clusters.}}
\label{fig11}
\end{figure}

Many existing algorithms in the field, e.g. \cite{dkm,dcn,opochinsky2020k,caron2018deep,tian2017deepcluster,pan2021image}, approximate the data clustering losses by first performing a crisp cluster assignment and then calculating the clustering losses using the crisply clustered data, based on a criterion such as the level of compactness or density of the data within clusters. Crisp assignment (as opposed to soft assignment) assigns a data point to only one cluster -- e.g., to the one with the closest center. However, such crisp cluster assignment of the data in an intermediate training iteration may mislead the training procedure if a significant number of samples are mis-clustered, as the error propagates to the following iterations. This crisp assignment issue would be more serious if there exists a high uncertainty when deciding to which cluster a data point should be assigned -- e.g., the extreme uncertainty would be related to the case where a data point is equally close to all cluster centers. \b{Recently, contrastive learning \cite{chen2020simple,c3,cc} has garnered significant attention within the domain of unsupervised representation learning and clustering. For instance, the Simple Framework for Contrastive Learning for Visual Representation Learning (SimCLR) \cite{chen2020simple} and Contrastive Clustering (CC) \cite{cc} have departed from the conventional practice of relying on pseudo labels for training neural networks. Instead, these methods generate two augmentations of a given data sample and treat the two augmentations of the same sample as positive pairs while designating all other samples in the batch as negative pairs, even if they correspond to the same object in different images. This approach, which treats the same objects in two different images as negative pairs, may inadvertently impact the network's ability to converge towards an optimal solution for the problem.}

To the best of our knowledge, only a limited number of deep clustering algorithms, such as \cite{dacc, dccm, ddc, scan}, attempt to utilize the pairwise relationships between sample pairs in an unsupervised manner. However, it should be noted that some of these methods, including \cite{dacc, ddc}, encounter challenges related to the presence of a high number of false positive pairs during their model training, primarily because they design their loss functions to include more and more samples till it includes all samples rather than focusing on highly confident samples.
All other deep clustering algorithms neglect the important relevant information available in sample pairs while the effectiveness of such information has been proven in the \emph{supervised} and \emph{semi-supervised} learning methods \cite{kaya2019deep,wang2017deep,xing2002distance,schroff2015facenet,xie2020pairwise} where the data class labels are employed during the training phase.
For example, metric learning algorithms are supervised learning techniques that learn a distance metric employing pairwise distances \cite{kaya2019deep,wang2017deep}. Their goal is to decrease the distance between similar samples, i.e. samples from the same class, and increase the distance between dissimilar samples, i.e. samples from different classes.

Furthermore, to the best of our knowledge, the existing clustering methods utilize a single loss function for all data clusters, ignoring the possible existence of differences between the characteristics of the different clusters. \b{For instance, consider a dataset comprising images of vehicles where one cluster represents compact cars and another represents trucks. The intra-cluster variability—such as the range of shapes, sizes, and designs—can be significantly different between these two categories.} We present novel cluster-specific losses in this paper, designed to not only emphasize the reconstruction and centering of data samples around cluster centers but also to aid the AE in finding distinctive properties within each cluster.  The only existing algorithm that explicitly employs cluster-specific losses is presented in \cite{opochinsky2020k}. However, this algorithm suffers from high computational cost as the algorithm requires training of K distinct AEs, where K is the number of data clusters.

In summary, to the best of our knowledge, all existing AE-based unsupervised clustering methods suffer from at least one of the following: the crisp assignment issue, ignoring the relevant, useful information available in the data pairs, failure to reliably identify similar and dissimilar samples, treating all clusters similarly through minimizing a single common loss function for all data clusters.

In this paper, we propose a novel AE-based clustering algorithm called Deep Clustering with Self-Supervision, DCSS, that addresses all the aforementioned drawbacks. 
DCSS is a novel unified framework that employs pairwise similarities as a means of self-supervision during its training procedure. DCSS mitigates the error propagation issue caused by the uncertain crisp assignments by employing the soft assignments in the loss function. DCSS considers an individual loss for every data cluster, where a loss consists of weighted reconstruction and clustering errors. A sample's clustering error is calculated using the sample's Euclidean distance to the cluster centers. This results in obtaining a latent space (for the AE), called $\mathbf{u}$ space, in which similar data points form hypersphere-like clusters, one hypersphere per cluster.\s{Hence, the AE's latent space, called $\mathbf{u}$ space, is trained to provide pure spheres of data clusters -- i.e., one sphere per cluster in the $\mathbf{u}$ space.}
To make the cluster distributions more distinguishable from each other and to accommodate more complex distributions, we propose to employ pairwise similarities and train a $K$-dimensional space $\mathbf{q}$ in which a pair of similar (dissimilar) samples sit very close to (far from) each other, where K is the number of data clusters and inner product is used for similarity measurement.
We define $\mathbf{q}$ to be the last layer of a fully connected network called MNet. Only similar and dissimilar pairs of samples contribute to training MNet.  %
Due to the curse of dimensionality \cite{friedman1997bias}, similar and dissimilar samples are not recognizable in the original input feature space. Instead, we propose to measure the pairwise similarities in the partially trained $\mathbf{u}$ and $\mathbf{q}$ spaces, which are more reliable\footnote{In this paper, space A is considered more reliable than space B if and only if the clustering performance in A is better than in B.} for similarity measurement.
The input layer of MNet is the output layer of the AE's encoder network, i.e., the $\mathbf{u}$ space. Our experiments, supported by mathematical proofs, demonstrate that the data representations in the $\mathbf{q}$ space are very close to one-hot vectors where the index of the most active element points the true cluster label out.
Furthermore, we demonstrate that the DCSS method can be employed as a general framework to improve the performance of the existing AE-based clustering methods, e.g. \cite{dec,idec,dcn,dkm}.
An intuitive motivation of the proposed method is illustrated in Fig. \ref{fig11}.

The rest of this paper is organized as follows. Section \ref{RL work} presents a brief review of deep clustering methods. Section \ref{proposed method} presents details of the proposed DCSS framework. Extensive experimental results that demonstrate the effectiveness of the DCSS method are presented in Section \ref{experiments}. Section \ref{conclusion} conveys the gist of this paper.

\section{Related Work} \label{RL work}
\s{\subsection{conventional clustering approaches:}
Clustering has been widely studied in machine learning from different aspects such as feature selection \cite{boutsidis2009unsupervised, liu2005toward, alelyani2018feature}, distance metric \cite{xing2002distance, xiang2008learning}, and categorizing methods \cite{macqueen1967some, von2007tutorial, li2004entropy}.
 K-means \cite{kmeans} and fuzzy c-means \cite{fcm} are the two popular conventional methods that are applicable to a wide range of tasks \cite{ijcnn10,ijcnn11,ijcnn12}. However, because of their distance metric, they can only extract local information of the data dealing with high dimensional feature spaces. Some conventional algorithms, such as \cite{ye2007discriminative}, aim to handle this difficulty by jointly performing subspace selection and a clustering algorithm in an iterative manner. At each iteration, they group data points using k-means and endeavor to maximize the inter-cluster variance employing the data projected in a lower-dimensional space. They repeated this process until convergence. Another group of conventional methods, called spectral clustering, such as \cite{ijcnn28,ijcnn29}, address the high input dimension issue by embedding the high dimensional data into a lower-dimensional space. They then apply clustering algorithms in the new space. For their embedding phase, first, they construct a weighted graph in which nodes are data samples and weights are defined based on pairwise relationships between them in the original space. Then, they define a minimization problem using the Laplacian matrix of the weighted graph. Although they could surpass the clustering performance of k-means in different applications, the complexity of solving the optimization problem limited the application of these algorithms to only small datasets. In order to make spectral clustering more applicable on large datasets, \cite{ijcnn30,ijcnn31} propose stochastic optimization methods that try to estimate the original optimization problem. \brown{However, since the Laplacian is defined on the high-dimensional original space and in this space, the distance between two points cannot properly represent the similarity or dissimilarity of data points, these conventional methods are not successful when dealing with complex datasets.}
 
 \subsection{Deep-learning-based approaches}}
An exhaustive review of previous works is beyond
the scope of this paper. We refer to the survey of Xu et al. \cite{xu2015comprehensive} on non-deeplearning-based clustering methods. The following focuses on the review of some related deep clustering methods. 

\subsection{Deep learning based methods}

Deep neural networks (DNN) have been widely used to tackle the unsupervised clustering problem. These algorithms try to train a DNN-based model in an unsupervised manner \cite{dml,Niu2021SPICESP, scan, shen2021you, huang2022learning}. 
\b{\cite{scan} utilizes self-supervised learning to create meaningful image feature representations without relying on labeled data. These learned features serve as a foundation for a unique clustering method. The technique involves identifying closest neighbors based on feature similarity and grouping each image with its neighbors, aiming to ensure semantic similarity within clusters and distinctiveness across clusters. However, a limitation of this approach is the computational and memory costs associated with mining nearest neighbors for each image, particularly with extensive datasets. \cite{Niu2021SPICESP} breaks down the clustering task into three phases: first, training the feature model to evaluate instance similarities; second, training the clustering head by minimizing the Cross-Entropy loss, which measures the difference between the predicted probability distribution over clusters for each instance and the distribution represented by the pseudo-labels; and third, conducting joint optimization using reliable pseudo-labeling to enhance both clustering precision and feature representation quality. \cite{huang2022learning} utilizes prototype scattering loss (PSL) to increase the separation between prototypical representations of distinct clusters. It also employs positive sampling alignment (PSA) to align an augmented view of an instance with the sampled neighbors of another view, which are considered genuine positive pairs, aiming to improve compactness within clusters.} \cite{hu2017learning} encourages predicted representations of the augmented data points to be close to those of the original data points by maximizing the information-theoretic dependency between data and their predicted representations. RUC \cite{park2021improving} proposes a two-step method where, in the first step, it endeavors to clean the dataset, and in the second step, it retrains the network with the purified dataset. \cite{dsec} trains a DNN in an unsupervised manner to extract an indicator feature vector for each data sample. It then uses the obtained vectors to assign the data points to different clusters.
Recently, contrastive learning has attracted researchers' attention in the unsupervised clustering field \cite{cc,chen2020simple,li2022twin}. As is discussed in Section \ref{sec:introduction}, such algorithms first need to construct negative and positive pairs by applying augmentation to the data points. They then map the data into a feature space and endeavor to maximize similarity (minimize dissimilarity) in positive (negative) pairs. An extensive review of the DNN-based methods can be found in \cite{min2018survey}.

Among the DNN-based models, the AE-based and generative-based algorithms have been widely studied and used for unsupervised data clustering. These two categories are reviewed in the following sections.  \s{For example, contrastive clustering (CC) \cite{cc} defines instance- and cluster-level losses, respectively, on rows and columns of a feature space in order to maximize similarity while minimizing dissimilarity.}   

\s{\o{Here we categorize the deep clustering methods to three groups: AE-based methods, generative-based methods, and other deep neural network (DNN)-based methods that doesn't fall in the first two groups.}
Deep clustering utilizes deep neural networks to find a suitable space for data clustering tasks. \brown{Deep clustering algorithms fall into three main categories: AE-based, generative model-based, and deep neural network (DNN) based algorithms}}

\subsubsection{AE-based algorithms}
AE-based algorithms utilize deep autoencoders to embed original data points in a  lower-dimensional space. In some algorithms, such as \cite{huang2014deep,ijcnn32}, learning the lower representation of the data points is separated from the clustering task. In \cite{huang2014deep}, an AE is used to find a lower-dimensional representation of data points by enforcing group sparsity and locality-preserving constraints. The cluster assignments are then obtained by applying the k-means algorithm to the obtained lower-dimensional space. Graph clustering \cite{schaeffer2007graph,nascimento2011spectral} is a key branch of clustering that tries to find disjoint partitions of graph nodes such that the connections between nodes within the same partition are much denser than those across different partitions. \cite{ijcnn32} takes advantage of a deep autoencoder to find a lower-dimensional representation of a graph; it then utilizes the k-means algorithm to define clusters in the lower-dimensional space.
 
In order to further improve clustering performance, more recent AE-based algorithms simultaneously embed data points in a lower-dimensional feature space and perform clustering using the obtained space. Deep embedded clustering (DEC) \cite{dec} first trains a stacked autoencoder layer by layer using the reconstruction losses and then removes the decoder and updates the encoder part by minimizing a Kullback–Leibler (KL) divergence between the distribution of soft assignments and a pre-determined target distribution. Soft assignments are the similarity between data points and cluster centers and are calculated using Students' t-distribution. Due to the unsupervised nature of the clustering problem, the target distribution of the data points is unknown. Hence, DEC uses an arbitrary target distribution, which is based on the squared of the soft assignments.
Despite the DEC method, a few recent studies, e.g \cite{dcn,dkm,idec}, propose to take advantage of the AE's decoder as well as the encoder. These algorithms use notions of both reconstruction and clustering losses with the goal of maintaining the local structure of the original data points while training the algorithm's networks. For example, improved deep embedding clustering (IDEC) \cite{idec} tries to improve the clustering performance of DEC by considering the reconstruction loss of an AE besides the KL divergence loss of DEC.\s{Deep convolutional embedded clustering (DCEC) \cite{guo2017deep} could enhance the performance of IDEC by changing the fully connected structure of IDEC to a deep convolutional autoencoder. Moreover, DCEC proposed an end-to-end pre-training scheme by minimizing the reconstruction loss instead of pre-training a stacked autoencoder proposed in DEC and IDEC.}\s{ Some other works, instead of applying self-training using soft cluster assignments, improve the clustering performance of DEC by proposing an independent approach \b{for finding} the target distributions. For example,} 
Improved deep embedding clustering with fuzzy supervision (IDECF) \cite{idecf}  improves the DEC method by employing both reconstruction and clustering losses and estimating the target distribution through training a deep fuzzy c-means network. Deep clustering network (DCN) \cite{dcn} jointly learns a lower-dimensional representation and performs clustering.\s{ aims to find a new representation of data points in which data points are separable by applying the k-means algorithm. To this end,} DCN trains its AE by minimizing a combination of the reconstruction loss and the objective function of the k-means algorithm. This results in a k-means-friendly latent space. DCN updates AE's parameters and cluster centers separately. The latter is based on solving a discrete optimization problem. In the deep k-means (DKM) algorithm \cite{dkm}, which has the same objective function as DCN, both network parameters and cluster centers are updated simultaneously by minimizing its objective function using stochastic gradient descent. 

Spectral clustering \cite{ijcnn28,ijcnn29} is a clustering approach that is based on building a graph of data points in the original space and then embedding the graph into a lower-dimensional space in which similar samples sit close to each other. Spectral clustering has been employed in DNN-based methods \cite{dsc,shaham2018spectralnet}. For example, deep spectral clustering (DSC) is recently presented in \cite{dsc}. DSC has a joint learning framework that creates a low-dimensional space using a dual autoencoder that has a common encoder network and two decoder networks. The first decoder tries to reconstruct the original input from the AE's latent space, and the second decoder endeavors to denoise the encoder's latent space. DSC considers reconstruction, mutual information, and spectral clustering losses for networks' training.
\subsubsection{Deep generative based algorithms}

Variational autoencoders (VAEs) \cite{kingma2013auto} and Generative adversarial networks (GANs) \cite{goodfellow2014generative} are among the most well-known deep generative models which are effective for data clustering. For example, variational deep embedding (VaDE) \cite{jiang2016variational} finds a latent space that captures the data statistical structure that can be used to produce new samples. The data generative process in VaDE is based on a Gaussian Mixture Model (GMM) and a deep neural network. Deep adversarial clustering  \cite{dac} is another generative model that applies the adversarial autoencoder \cite{44904} to clustering. The adversarial autoencoder employs an adversarial training procedure to match the aggregated posterior of the latent representation with a Gaussian Mixture distribution. \s{To this end, during the training procedure, }\cite{dac} objective function includes a reconstruction term, Gaussian mixture model likelihood, and the adversarial objective. More generative-based models can be found in \cite{min2018survey}.
GAN is a method of training a generative model by framing the problem as a supervised learning task with two sub-models: the generator model that is trained to generate new samples and the discriminator model that tries to classify examples as either real or fake (generated). Many GAN-based algorithms have been developed for clustering tasks \cite{min2018survey,cao2022unsupervised}. \cite{chen2016infogan} presents a GAN-based algorithm that learns disentangled representations in an unsupervised manner. It maximizes the mutual information between a small subset of the latent variables and the observation. \cite{yu2018mixture} expands the idea of GMM to the GAN mixture model (GANMM) by devising a GAN model for each cluster. GAN-based clustering algorithms suffer from vanishing gradients and mode collapse.

\s{\brown{Deep generative models have been studied in many research works because of their abilities in capturing the data distribution by neural networks, from which new samples could be generated. Generative adversarial networks (GANs) and variational autoencoders (VAEs) are among the most well-known deep generative models which could be applied to various tasks \cite{ehsan2017infinite,kingma2014semi,jiang2016variational,yu2018mixture}.
\b{GAN is }

For example, Variational Deep Embedding (VaDE) \cite{jiang2016variational} \b{has} an unsupervised training approach which is designed for the clustering task. VaDE finds a \b{latent space that captures} the statistical structure of the data and is capable of producing new samples. The data generative process in VaDE is modeled with a Gaussian Mixture Model (GMM) and a deep neural network. The GMM first chooses a cluster from which the latent embedding is generated and then the latent embedding is decoded using a DNN \r{to the observable? WHAT do you mean by observable??}.}

\brown{Deep adversarial clustering (DAC) \cite{dac} is another generative model for data clustering. It makes use of adversarial autoencoders that applies an adversarial training scheme to match the aggregated posterior of the latent representation with the GMM distribution. To this end, during the training procedure, DAC endeavors to minimize three losses: 1-data reconstruction loss, 2- GMM likelihood, and 3- adversarial loss. The last two losses could be considered as a clustering loss. 
GAN is another popular deep generative model that recently attract researchers' attention \cite{gui2020review}. GANs perform a min-max game between a generator and a discriminator network. The generator network aims to generate fake samples and the discriminator tries to recognize the distribution of fake samples from that of real samples. Many GAN-based clustering algorithms have been developed for clustering task \cite{min2018survey}. For instance, \cite{chen2016infogan} is an effective unsupervised method that maximizes the mutual information between noisy variables of a GAN structure and the observation. As another example, \cite{yu2018mixture} aimed to expand the idea of GMM to GAN mixture model (GANMM) by devising a GAN model for each cluster. GAN-based clustering algorithms suffer from vanishing gradients and mode collapse.
}}

\s{\subsubsection{DNN-based algorithms}
\brown{DNN-based algorithms have been designed to tackle clustering problem in many data-driven applications. These algorithms try to train a DNN model in an unsupervised manner. For instance, \cite{hu2017learning} motivates the predicted representations of an augmented data points to stay close to that of the original data point by maximizing the mutual information dependency between the data point and its representations. As another example, RUC \cite{park2021improving} proposed a new robust learning training approach that could enhance clustering performance of the existing clustering methods. RUC is a two steps method where in the first step RUC endeavors to obtain a clean dataset and in the second step, it retrain with the purified dataset.}   
Contrastive learning algorithms, such as \cite{cc,chen2020simple}, have been widely attracted researchers' attention in the recent year due to their promising performance. As is discussed in Section \ref{sec:introduction}, they first construct negative and positive pairs by applying data augmentation on data points. Then, they map them in the feature space and endeavor to maximize similarity (minimize dissimilarity) between positive (negative) pairs. For example, contrastive clustering (CC) \cite{cc} defines instance- and cluster-level losses respectively on rows and columns of the feature space in order to maximize similarity while minimizing dissimilarity.  } 

\section{Proposed Method}\label{proposed method}
\begin{figure*}[t]
  \centering
  \includegraphics[width=0.70\linewidth]{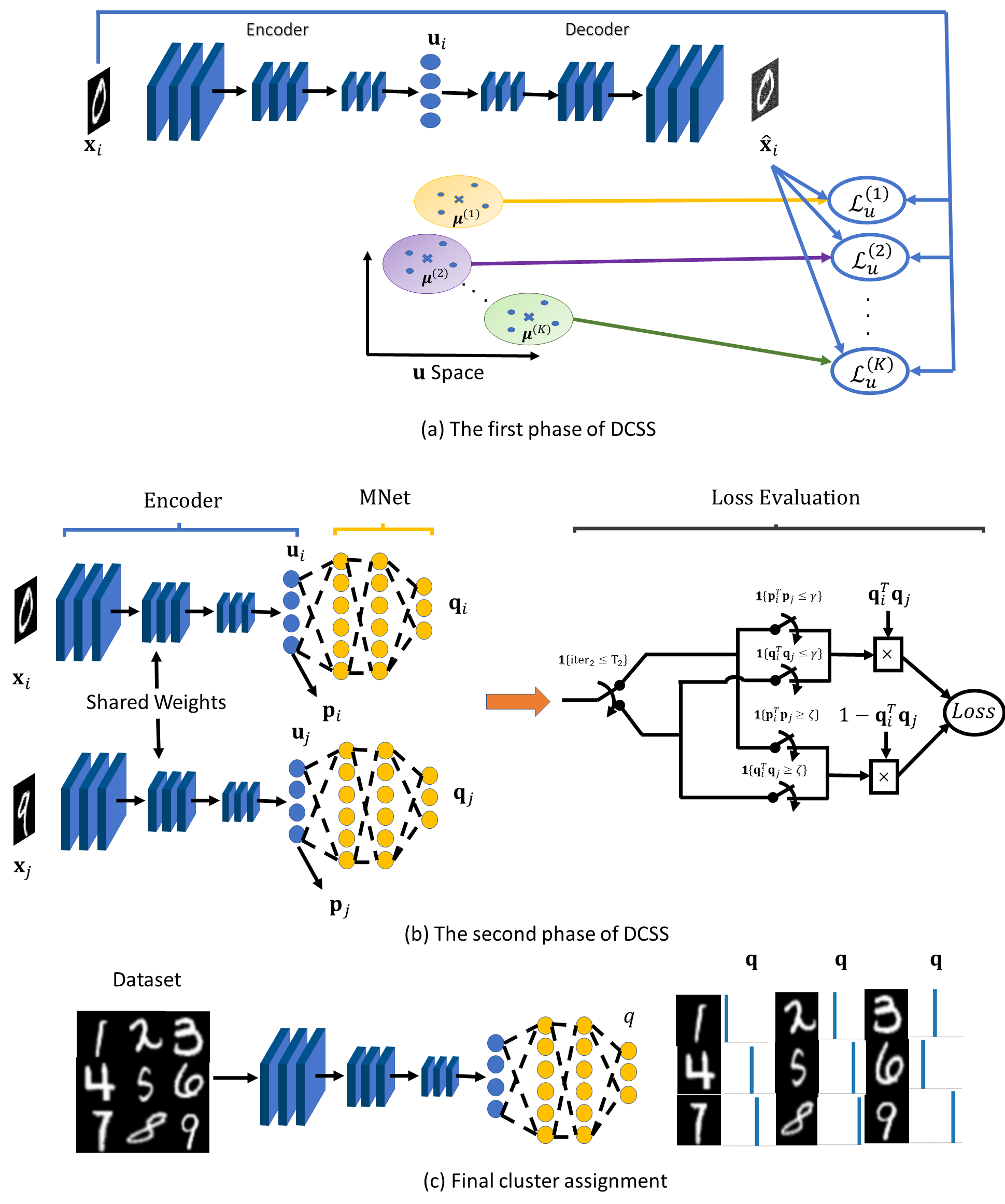}
\caption{(a) Training scheme of the first phase of DCSS.  (b) Training procedure of the second phase of DCSS; at the outset, when $\text{iter}_2\leq T_2$, MNet is trained based on the pairwise similarities defined in the $\mathbf{u}$ space -- i.e. The similarity between two data points $\mathbf{x}_i$ and $\mathbf{x}_j$ is determined using the dot product of $\mathbf{p}_i$ and $\mathbf{p}_j$. At the later stages of MNet training, when $\text{iter}_2> T_2$, the pairwise similarities are measured in the $\mathbf{q}$ space itself using $\mathbf{q}_i^T\mathbf{q}_j$. (c) Visualization of the final cluster assignment using DCSS; after completing the training phases shown in (a) and (b), we cluster a data point by locating the largest element of its representation in the $\textbf{q}$ space.}
\label{Model}
\end{figure*}

Consider a K-clustering problem that aims to partition a given dataset $X = \{\mathbf{x}_1,\mathbf{x}_2,...\mathbf{x}_N\}$ into K disjoint clusters, where $\mathbf{x}_i$ indicates the ith data sample, N is the number of data points, and K is a predefined user-settable parameter. DCSS utilizes an AE consisting of an encoder and a decoder network, respectively denoted by $f(.)$ and $g(.)$. Latent representation of $X$ is denoted by $U=\{\mathbf{u}_1,\mathbf{u}_2,...,\mathbf{u}_N\}$, where $\mathbf{u}_i = f(\mathbf{x}_i;\boldsymbol{\theta}_e) \in \mathbb{R}^d$, $d$ indicates dimension of the latent space, and $\boldsymbol{\theta}_e$ denotes parameters of the encoder network. The reconstructed output of the AE is denoted by $\hat{\mathbf{x}}_i = g(\mathbf{u}_i;\boldsymbol{\theta}_d)$, where $\boldsymbol{\theta}_d$ represents the decoder parameters. The center of the kth data group in the $\mathbf{u}$ space is denoted by $\boldsymbol{\mu}^{(k)}$.
To accommodate complex cluster distributions, we propose to employ pairwise similarities in DCSS. To this end, we employ the\s{As aforementioned, to investigate the pairwise relationship between the data points, we propose to employ a} fully connected network MNet which takes the latent representation of each data point, i.e. $\mathbf{u}_i$, as input and maps it to a $K$-dimensional vector $\mathbf{q}_i$ which its kth element indicates the probability of $\mathbf{x}_i$ belonging to the kth data cluster. In this paper, the output of MNet for the ith data point is denoted by $\mathbf{q}_i = M(\mathbf{u}_i;\boldsymbol{\theta}_M)$, where $M(.)$ and $\boldsymbol{\theta}_M$ respectively shows MNet and its corresponding parameters.  
\\
The proposed DCSS method consists of two phases. The first phase is to provide hypersphere-like data clusters through training an AE using weighted reconstruction and centering losses, and the second phase is to employ pairwise similarities to self-supervise the remaining training procedure.  

\subsection{Phase 1: AE training}
\label{step1}
\s{To obtain the reliable low dimensional space $\mathbf{u}$ in which identification of the similar and dissimilar samples is possible, we propose to train an AE with a novel and effective loss function consisting of weighted reconstruction and centering losses.}
At each training batch $\mathfrak{B}$, we propose to train the AE in K successive runs, where at each run, a specific loss corresponding to a specific data cluster is minimized. More specifically, at the kth run, the AE focuses on the reconstruction and centering of the data points that are more probable to belong to the kth data cluster.

\b{The} loss function of the kth run, i.e. $\mathcal{L}_u^{(k)}$, is shown in \eqref{eq1} where $\mathcal{L}_r^{(k)}$ and $\mathcal{L}_c^{(k)}$, shown in \eqref{eq666} and \eqref{eq4444}, respectively denotes weighted summation of the sample reconstruction and centering losses.  $\alpha$ is a hyperparameter indicating the importance of centering loss vs. reconstruction loss. $m$ indicates the level of fuzziness and is set to 1.5 in all experiments. 
\begin{subequations}\label{new_eq1}
\begin{eqnarray}
 & \mathcal{L}_u^{(k)} = \mathcal{L}^{(k)}_r +\alpha \mathcal{L}^{(k)}_c \label{eq1}\\
 & \mathcal{L}^{(k)}_r = \sum_{\mathbf{x}_i\in \mathfrak{B}} p_{ik}^m ||\mathbf{x}_i-\hat{\mathbf{x}}_i||_2^2 \label{eq666}\\
 & \mathcal{L}^{(k)}_c =\sum_{\mathbf{x}_i\in \mathfrak{B}} p_{ik}^m ||\mathbf{u}_i-\boldsymbol{\mu}^{(k)}||_2^2 \label{eq4444}
\end{eqnarray}
\end{subequations} 
\b{Where}
\begin{align}
    & p_{ik} = \frac{\frac{1}{||\mathbf{u}_i-\boldsymbol{\mu}^{(k)}||_2^{2/(m-1)}}}{\sum_{j=1}^{K} \frac{1}{||\mathbf{u}_i-\boldsymbol{\mu}^{(k)}||_2^{2/(m-1)}}}\label{eq3}
\end{align}

Since data clustering is an unsupervised task, the data cluster memberships are unknown at the problem's outset. As such, at the kth run, we use the Euclidean distance between $\mathbf{u_i}$ and  $\boldsymbol{\mu}^{(k)}$ as a means of measuring the membership degree of $\mathbf{x}_i$ to the kth data cluster, denoted by $p_{ik}$ defined in \eqref{eq3} where $\mathbf{p}_i=[p_{i1}, \ldots, p_{iK}]$.
The cluster memberships are used as the sample weights in \eqref{eq666} and \eqref{eq4444}. The closer a sample is to the cluster center $\boldsymbol{\mu}^{(k)}$, the higher contribution that sample has in minimizing the loss function corresponding to the kth run.

\s{At the kth run, to motivate the $\text{AE}$ to concentrate on the kth group data points, in $\mathcal{L}_r^{(k)}$ and $\mathcal{L}_c^{(k)}$, higher weights are assigned to the samples closer to the center $\boldsymbol{\mu}^{(k)}$. Membership of the ith data point to the kth data group, in the $\mathbf{u}$ space, is shown as $p_{ik}$ defined in (\ref{eq3}). $p_{ik}$ is used as the weight of the ith sample reconstruction and centering losses at the kth run of DCSS$_u$.} 

Every $T_1$ number of training epochs, we update the centers to the average of weighted samples in the $\mathbf{u}$ space, as is shown in (\ref{eq2}), where samples closer to $\boldsymbol{\mu}^{(k)}$ have more contribution to updating. 
\begin{align}
\boldsymbol{\mu}^{(k)} = \frac{\sum_{\mathbf{x}_i\in X}p_{ik}^m \mathbf{u}_i}{\sum_{\mathbf{x}_i\in X}p_{ik}^m} \label{eq2}
\end{align}

The block diagram of the first phase is shown in Fig. \ref{Model}(a). As is demonstrated in our experiments (see Section \ref{t-SNE visualization}), minimizing \eqref{eq1} results in forming hypersphere-like groups of similar samples in the $\mathbf{u}$ space, one hypersphere per cluster. 
\s{Move this to Experiments: A preliminary version of DCSS$_u$ is presented in \cite{ijcnn}.} \s{HERE I removed the name DCSS$_u$. It is confusing to have two names at this section. }

\subsection{Phase 2: self-supervision using pairwise similarities}\label{2&3}

To allow accommodating non-hypersphere shape distributions and to employ the important information available in the pairwise data relations, we propose to append a fully connected network, called MNet, to the encoder part of the AE, trained in Phase 1 while discarding its decoder network.\s{After completing phase 1, we discard the decoder part of the autoencoder and append the fully connected network MNet to the trained encoder.\s{ We take pairwise similarities to supervise MNet's training phase.}} 
The MNet's output layer, i.e. the $\mathbf{q}$ space, consists of K neurons where each neuron corresponds to a data cluster. We utilize the soft-max function at the output layer to obtain probability values employed for obtaining the final cluster assignments. More specifically, for an input sample $\mathbf{x}_i$, the output value at the jth neuron, i.e. $q_{ij}$, denotes the probability of $\mathbf{x}_i$ belonging to the jth cluster. 

MNet aims to strengthen (weaken) similarities of two similar (dissimilar) samples.
MNet parameters, i.e., $\boldsymbol{\theta}_M$, are initialized with random values. Hence, at the first few training epochs, when $\mathbf{q}$ is not yet a reliable space, pairwise similar and dissimilar samples are identified in the $\mathbf{u}$ space. Then, after a few training epochs, pairs of similar and dissimilar samples are identified in the $\mathbf{q}$ space.
In both of the $\mathbf{u}$ and $\mathbf{q}$ spaces, we define two samples as similar (dissimilar) if the inner product of their corresponding cluster assignment vectors is greater (lower) than threshold $\zeta$ ($\gamma$). 
More specifically, knowing that the kth element of $\mathbf{p}_i$ ($\mathbf{q}_i$) denotes the membership of $\mathbf{x}_i$ to the kth cluster in the $\mathbf{u}$ ($\mathbf{q}$) space, the inner product of $\mathbf{p}_i$ ($\mathbf{q}_i$) and $\mathbf{p}_j$ ($\mathbf{q}_j$) is considered as the notion of similarity between data points $\mathbf{x}_i$ and $\mathbf{x}_j$.

The loss function proposed for the MNet training, at the first $T_2$ training epochs, is shown in (\ref{eq4}) where
$\zeta$ and $\gamma$ are two user-settable hyperparameters, and $\mathbbm{1}\{.\}$ is the indicator function.
\begin{align}\label{eq4}
    \mathcal{L}_M =\sum_{\mathbf{x}_i,\mathbf{x}_j\in \mathfrak{B}} \mathbbm{1}\{ \mathbf{p}_i^T \mathbf{p}_j \geq \zeta\}(1-\mathbf{q}_i^T \mathbf{q}_j)+\mathbbm{1}\{ \mathbf{p}_i^T \mathbf{p}_j \leq \gamma\}(\mathbf{q}_i^T \mathbf{q}_j)
\end{align}
As can be inferred from (\ref{eq4}), only similar and dissimilar samples, identified in the $\mathbf{u}$ space, contribute to the MNet training and a pair of samples with a similarity value between $\zeta$ and $\gamma$, i.e. in the ambiguity region, does not contribute to the current training epoch. Therefore, minimizing $\mathcal{L}_M$ strengthens (weakens) the similarity of similar (dissimilar) samples in the $\mathbf{q}$ space. Along with training the MNet parameters, the encoder parameters $\boldsymbol{\theta}_e$ are also updated through back-propagation in an end-to-end manner. After completing each training epoch, centers $\boldsymbol{\mu}^{(k)}, k=1, \ldots, \text{K}$, are also updated using \eqref{eq2}.
 \begin{algorithm}[t]
\caption{ Clustering procedure using 
DCSS}\label{alg1}
\begin{algorithmic}[1]

\Statex{\textbf{Input:} Data points $X$, $\boldsymbol{\theta}_e$, $\boldsymbol{\theta}_d$, $\boldsymbol{\theta}_M$,  $\boldsymbol{\mu}^{(k)}$ for $k=1,\ldots,\text{K}$}

\Statex{\textbf{Output: } $\boldsymbol{\theta}_e$, $\boldsymbol{\theta}_M$ }
\Statex{}
\Statex{\textbf{Phase 1:}}
\STATE Initialize  $\boldsymbol{\theta}_e$ and $\boldsymbol{\theta}_d$ with a pre-trained network (see Section 2 of the supplementary material).
\FOR {$\text{iter}_1 \in \{1,2,...,\text{MaxIter}_1\}
$}
 
\FOR {$k \in \{1,2,...,\text{K}\}$}
\STATE Compute $p_{ik}$ using \eqref{eq3}, for $\, \, i\in \mathfrak{B}$
\STATE Update $\text{AE}$'s  parameters by employing \eqref{eq1} as loss function 
\ENDFOR
\STATE
Every $T_1$ iterations, update cluster centers using \eqref{eq2}
\ENDFOR
\Statex{}
\Statex{\textbf{Phase 2:}}
\FOR {$\text{iter}_2 \in \{1,2,...,\text{MaxIter}_2\}$ }
\IF {$\text{iter}_2 \leq T_2$}
\STATE Compute vectors $\mathbf{p}_i$ for $i \in \mathfrak{B}$
\STATE Compute vectors $\mathbf{q}_i$ for $i \in \mathfrak{B}$
\STATE Update $\boldsymbol{\theta}_e$ and $\boldsymbol{\theta}_M$ to minimize \eqref{eq4}
\STATE Update centers $\boldsymbol{\mu}^{(k)}, \, k=1,\ldots,\text{K}$ , using \eqref{eq2}
\ELSE
\STATE Compute  $\mathbf{q}_i$ for $i \in \mathfrak{B}$
\STATE Update $\boldsymbol{\theta}_e$ and $\boldsymbol{\theta}_M$ to minimize \eqref{eq5}
\ENDIF
\ENDFOR
\Statex{}
\Statex{\textbf{Final Cluster Assignments:}}
\STATE  Compute $\boldsymbol{q}_i$ for $\boldsymbol{x}_i$, $i=1,\ldots \text{N}$
\STATE Assign each data sample to the most probable cluster

\end{algorithmic}
\end{algorithm}

After $T_2$ epochs, when $\mathbf{q}$ becomes a relatively reliable space for identifying similar and dissimilar samples, we further train MNet using the loss function $\mathcal{L}^\prime_M$ defined in (\ref{eq5}). A pair contributes to $\mathcal{L}^\prime_M$ if its corresponding similarity value, in the $\mathbf{q}$ space, is not in the ambiguity region. As is demonstrated in Section \ref{experiments}, as the MNet training phase progresses, more and more pairs contribute to the training procedure. Again, the $\mathbf{u}$ space receives small updates through the backpropagation process when minimizing $\mathcal{L^\prime}_M$.
\begingroup\makeatletter\def\f@size{9}\check@mathfonts
\begin{align}\label{eq5}
    \mathcal{L}^\prime_M =\sum_{\mathbf{x}_i,\mathbf{x}_j\in \mathfrak{B}} \mathbbm{1}\{ \mathbf{q}_i^T \mathbf{q}_j \geq \zeta\}(1-\mathbf{q}_i^T \mathbf{q}_j)+\mathbbm{1}\{ \mathbf{q}_i^T \mathbf{q}_j \leq \gamma\}(\mathbf{q}_i^T \mathbf{q}_j)
\end{align}
Fig. \ref{Model}(b) shows the overall training procedure of the DCSS's second phase.

\subsection{Final cluster assignments}

To determine the final cluster assignment of a data point $\mathbf{x}_i$, we utilize the trained encoder and MNet networks to obtain the data representation in the $\mathbf{q}$ space, i.e. $\mathbf{q}_i$. $\mathbf{x}_i$ is assigned to the most probable cluster, i.e. the index corresponding to the highest element of $\mathbf{q}_i$ is the cluster label of $\mathbf{x}_i$. Such clustering assignment process is shown in Fig. \ref{Model}(c).

The pseudo-code of the DCSS algorithm is presented in Algorithm \ref{alg1}. 

\subsection{Proper choice of $\zeta$ and $\gamma$}

In this section, we discuss the selection of optimal hyperparameters $\zeta$ and $\gamma$ for our model. It is crucial to choose appropriate values for these hyperparameters, as the model's performance is greatly impacted by their values. When the value of $\zeta$ is high and $\gamma$ is small (e.g., $\zeta = 0.9$ and $\gamma = 0.1$), the model tends to consider only a small subset of the available data during the second training phase, resulting in the neglect of crucial information between true similar and dissimilar samples. Conversely, when the value of $\zeta$ is low and $\gamma$ is high (e.g., $\zeta = 0.5$ and $\gamma = 0.5$), the model may struggle to accurately distinguish between similar and dissimilar samples. Therefore, selecting optimal values for $\zeta$ and $\gamma$ is critical to ensure the effectiveness of our model.  \\ 
\textbf{Notation clarification:} Representation of the ith sample in the $\mathbf{q}$ space is shown by $\mathbf{q}_i$. The kth element of $\mathbf{q}_i$ is shown by $q_{ik}, k=1,\ldots, K$, where $K$ is the number of data clusters. Note that $\mathbf{q}_i$ is the MNet output when the input sample is $x_i$. Since we employ soft-max as the final layer of MNet, $ 0 \leq q_{il} \leq 1$ where $1 \leq l \leq K$ and the $\ell_1$-norm of $\mathbf{q}_i$ is equal to 1. Furthermore, as is discussed in the manuscript, parameters $\zeta$ and $\gamma$ are values between 0 and 1.

\begin{definition} \label{Def1}
Two data points, i.e. i and j, are adjacent (aka similar) if and only if $\mathbf{q}_i^T\mathbf{q}_j\geq \zeta$.
\end{definition}
\s{\brown{\textbf{Definition 1:} two data points, i.e. i and j, are adjacent if and only if $\mathbf{q}_i^T\mathbf{q}_j\geq \zeta$.}}

\begin{definition} \label{Def2}
Two data points, i.e. i and j, are in the same cluster if and only if the index of the maximum value in their corresponding \s{representation in the} $\mathbf{q}$ vector (i.e. $\mathbf{q}_i$ and $\mathbf{q}_j$) are equal. 
\end{definition}

\s{\brown{\textbf{Definition 2:} two data points, i.e. i and j, are in the same cluster if and only if the index of the maximum value in their corresponding \s{representation in the} $\mathbf{q}$ (i.e. $\mathbf{q}_i$ and $\mathbf{q}_j$) are equal.} \b{Note $||\mathbf{q}_k||_1 = 1$ for $k=1,\ldots,K$.}}

\begin{theorem} \label{theorem1}
Consider the ith and jth data points. \s{Considering two data points i and j,} Then \s{the maximum inner product of $\mathbf{q}_i$ and $\mathbf{q}_j$ is}: \s{less than the maximum elements of $\mathbf{q}_i$ and $\mathbf{q}_j$ or in other words:}
\begin{align}\label{eq77}
    \mathbf{q}_i^T\mathbf{q}_j \leq \min\big \{\max_l\{q_{il}\},\max_l\{q_{jl}\}\big\}
\end{align}
where $ \mathbf{q}_i^T\mathbf{q}_j$ is the inner product of the two vector $\mathbf{q}_i$ and $\mathbf{q}_j$. \s{$q_{il}$ and $q_{jl}$ denote the $lth$ element of $\mathbf{q}_i$ and $\mathbf{q}_j$ \s{ soft cluster assignment}, respectively.}
\end{theorem}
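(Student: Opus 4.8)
The plan is to prove the two one-sided bounds $\mathbf{q}_i^T\mathbf{q}_j \leq \max_l\{q_{jl}\}$ and $\mathbf{q}_i^T\mathbf{q}_j \leq \max_l\{q_{il}\}$ separately, and then combine them, since the minimum of two quantities that both dominate $\mathbf{q}_i^T\mathbf{q}_j$ must itself dominate it. The essential structure to exploit is that both $\mathbf{q}_i$ and $\mathbf{q}_j$ are outputs of a soft-max layer, so by the notation clarification preceding the theorem we have $q_{il}\geq 0$ for every $l$ and $\sum_{l=1}^{K} q_{il}=1$ (the $\ell_1$-norm equals $1$, and nonnegativity makes the $\ell_1$-norm coincide with the ordinary sum). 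In other words, each $\mathbf{q}_i$ lies on the probability simplex.

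First I would expand the inner product as $\mathbf{q}_i^T\mathbf{q}_j=\sum_{l=1}^{K} q_{il}\,q_{jl}$. The key observation is that this is a convex combination of the entries of $\mathbf{q}_j$ with weights $q_{il}$: the weights are nonnegative and sum to one. A convex combination never exceeds the largest of the quantities being combined, so I would bound each factor by writing $q_{jl}\leq \max_m\{q_{jm}\}$ and pulling the maximum out of the sum:
\begin{align*}
\sum_{l=1}^{K} q_{il}\,q_{jl}\;\leq\;\sum_{l=1}^{K} q_{il}\,\max_m\{q_{jm}\}\;=\;\max_m\{q_{jm}\}\sum_{l=1}^{K} q_{il}\;=\;\max_m\{q_{jm}\},
\end{align*}
where the final equality uses $\sum_l q_{il}=1$. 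This establishes $\mathbf{q}_i^T\mathbf{q}_j\leq\max_l\{q_{jl}\}$.

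Next, since the inner product is symmetric in $i$ and $j$, the identical argument with the roles of $\mathbf{q}_i$ and $\mathbf{q}_j$ interchanged yields $\mathbf{q}_i^T\mathbf{q}_j\leq\max_l\{q_{il}\}$. Because $\mathbf{q}_i^T\mathbf{q}_j$ is therefore a lower bound for both $\max_l\{q_{il}\}$ and $\max_l\{q_{jl}\}$, it is a lower bound for their minimum, giving exactly \eqref{eq77}.

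I do not expect any genuine obstacle here; the proof is a short two-line bounding argument. The only point requiring care is that \emph{both} defining properties of the soft-max output are used at once — nonnegativity (to bound each entry by the maximum term-by-term without sign reversals) and the normalization $\sum_l q_{il}=1$ (to collapse the factored-out maximum). Forgetting the normalization would leave a spurious factor of $\|\mathbf{q}_i\|_1$, and forgetting nonnegativity would break the term-by-term inequality, so the cleanest framing is to recognize the inner product as a convex combination from the outset.
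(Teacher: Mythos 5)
Your proof is correct, and it takes a genuinely different route from the paper's. The paper argues extremally: it fixes $\mathbf{q}_i$, considers a maximizer $\mathbf{q}_j^*$ of $\mathbf{q}_i^T\mathbf{q}_j$ over the $\ell_1$-normalized vectors, and shows by contradiction (shifting mass from any other coordinate onto the coordinate $r=\argmax_l\{q_{il}\}$) that $\mathbf{q}_j^*$ must be one-hot, from which $\mathbf{q}_i^T\mathbf{q}_j \leq \max_l\{q_{il}\}$ follows; it then repeats the argument symmetrically. You instead bound the inner product directly as a convex combination, $\sum_l q_{il}q_{jl} \leq \max_m\{q_{jm}\}\sum_l q_{il} = \max_m\{q_{jm}\}$, using nonnegativity and normalization in one line, then symmetrize. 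Your argument is shorter and avoids the existence-of-a-maximizer and contradiction machinery entirely; the paper's argument is longer but has the side benefit of exhibiting that the bound is attained precisely when one of the vectors is one-hot, which echoes the paper's later empirical observation that the $\mathbf{q}$ representations converge to near-one-hot vectors. For the inequality itself, your convex-combination framing is the cleaner proof, and your care in flagging that both nonnegativity and the unit sum are needed is exactly right.
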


\begin{proof}
\s{\textbf{Proof 1:}}
Assume $\mathbf{q}_j^*$ is a maximal vector that satisfies the below inequality: 
\begin{align}\label{eqAss1}
    \mathbf{q}_i^T \mathbf{q}_j\leq \mathbf{q}_i^T \mathbf{q}_j^*,
\end{align}
where \s{for all \b{possible} $\mathbf{q}_j$, we have $\mathbf{q}_i^T \mathbf{q}_j\leq \mathbf{q}_i^T \mathbf{q}_j^*$ and} $||\mathbf{q}_j^*||_1 = 1$. In addition, assume the index of the maximum element of $\mathbf{q}_i$ is $r$:
\begin{align}\label{eqAss2}
   r=\argmax_l\{q_{il}\}
\end{align}
\s{$r=\argmax_l(q_{il})$.}

\noindent In the following, we first prove by contradiction that $\mathbf{q}_j^*$ must be a one-hot vector. Then we prove (\ref{eq77}).

\noindent \s{Proof by contradiction: Lets }Assume $\mathbf{q}_j^*$ is not a one-hot vector. Therefore, there exists at least one index\s{an index in $q_j^*$}, i.e. $e$, that its corresponding element $q_{je}^*$ is non-zero\s{has the following property}:
\begin{align}
\exists \,\,e: e\neq r \,\,\, \text{and}\,\,\, q_{je}^*\neq 0.
\end{align}
Now, let's define\s{consider} a vector $\hat{q}_j$ as follows\s{that its $lth$ element has the following property}:
\begin{align}
     \hat{q}_{jl}=\begin{cases}
               q_{je}^*+q_{jr}^*& \text{if} \,\,\,l=r\\
               0 & \text{if} \,\,\,l=e\\
               q_{jl}^* & O.W
            \end{cases},
\end{align}
where $\hat{q}_{jl}$ denotes the $l^{th}$ element of $\hat{\mathbf{q}}_j$.
Since $||\mathbf{q}_j^*||_1 = 1$, we can immediately show \s{conclude }that $||\hat{\mathbf{q}}_j||_1 = 1$. Moreover, we can represent the inner products\s{expand} $\mathbf{q}_i^T\mathbf{q}_j^*$ and $\mathbf{q}_i^T\hat{\mathbf{q}}_j$ as shown in \eqref{eq100} and \eqref{eq111}, respectively.

\begin{align}
    \mathbf{q}_i^T\mathbf{q}_j^*= q_{ir}q_{jr}^*+q_{ie}q_{je}^*+\sum_{l\neq r,e}q_{il}q_{jl}^* \label{eq100}\\
    \mathbf{q}_i^T\hat{\mathbf{q}}_j= q_{ir}(q_{jr}^*+q_{je}^*)+q_{ie}\times 0+\sum_{l\neq r,e}q_{il}q_{jl}^* \label{eq111}
\end{align}
Since $q_{ir}$ is the maximum element of $\mathbf{q}_i$, we can readily show\s{conclude} that $\mathbf{q}_i^T\mathbf{q}_j^*\leq \mathbf{q}_i^T\hat{\mathbf{q}}_j$, which contradicts the assumption shown in equation (\ref{eqAss1}); thus, $\mathbf{q}_j^*$ must be a one-hot vector. 
Therefore:
\begin{align} \label{eq144}
\mathbf{q}_i^T\mathbf{q}_j^* \leq \max_l \{q_{il}\}     
\end{align}
Considering (\ref{eq144}) and (\ref{eqAss1}), we have: 
\begin{align}\label{eq1555}
\mathbf{q}_i^T\mathbf{q}_j \leq \max_l\{q_{il}\}.
\end{align}

\noindent Similarly, for the ith sample, we can show that $\mathbf{q}_i^{*T}\mathbf{q}_j \leq \max_l \{q_{jl}\}$, hence:
\begin{align}\label{eq16666}
\mathbf{q}_i^T\mathbf{q}_j \leq \max_l\{q_{jl}\}.
\end{align}
(\ref{eq1555}) and (\ref{eq16666}) proves (\ref{eq77}).

\brown{\s{In this case, $\mathbf{q}_i^T\mathbf{q}_j \leq \max_l(q_{il})$. If we repeat the same process to find $p_i^*$, we will obtain $q_i^Tq_j \leq \max_l(q_{jl})$. Hence we can prove \eqref{eq77}.}} 
\end{proof}

\begin{corollary}\label{col1.1}
\b{For two adjacent samples, denoted as i and j, the maximum value among their $\mathbf{q}$ representations (i.e $\mathbf{q}_i$ and $\mathbf{q}_j$) are greater than $\zeta$.}
\end{corollary}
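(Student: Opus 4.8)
The plan is to obtain the corollary as an immediate consequence of Theorem \ref{theorem1} by chaining it with Definition \ref{Def1}. First I would unpack the hypothesis: since samples $i$ and $j$ are assumed adjacent, Definition \ref{Def1} supplies the lower bound $\mathbf{q}_i^T\mathbf{q}_j \geq \zeta$. Second, I would invoke the upper bound \eqref{eq77} proved in Theorem \ref{theorem1}, namely $\mathbf{q}_i^T\mathbf{q}_j \leq \min\big\{\max_l\{q_{il}\},\max_l\{q_{jl}\}\big\}$. Stacking these two facts gives the sandwich
\begin{align}
\zeta \leq \mathbf{q}_i^T\mathbf{q}_j \leq \min\big\{\max_l\{q_{il}\},\max_l\{q_{jl}\}\big\}. \nonumber
\end{align}

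The remaining step is the elementary observation that a lower bound on the minimum of two quantities is automatically a lower bound on each of them. Thus from $\zeta \leq \min\big\{\max_l\{q_{il}\},\max_l\{q_{jl}\}\big\}$ I would read off both $\max_l\{q_{il}\} \geq \zeta$ and $\max_l\{q_{jl}\} \geq \zeta$ separately, which is precisely the assertion that the largest element of each of $\mathbf{q}_i$ and $\mathbf{q}_j$ is at least $\zeta$.

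There is essentially no technical obstacle here; the whole content of the corollary is the pairing of the definition's lower bound with the theorem's upper bound. The only point worth flagging is a minor wording discrepancy: the statement says the maxima are ``greater than'' $\zeta$, whereas the non-strict inequality $\geq$ in Definition \ref{Def1} and Theorem \ref{theorem1} yields only ``greater than or equal to.'' I would therefore state the conclusion with $\geq$, or alternatively note that strictness holds outside the degenerate boundary case $\mathbf{q}_i^T\mathbf{q}_j = \zeta$ coinciding with equality in \eqref{eq77}, which requires the relevant $\mathbf{q}$ vectors to be one-hot and aligned.
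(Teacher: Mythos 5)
Your argument is correct and is exactly the paper's proof: the paper likewise sandwiches $\zeta \leq \mathbf{q}_i^T\mathbf{q}_j \leq \min(q_{ir}, q_{jo})$ using Definition \ref{Def1} and the bound \eqref{eq77} from Theorem \ref{theorem1}, then reads off $\zeta \leq q_{ir}$ and $\zeta \leq q_{jo}$. Your remark about the non-strict inequality ($\geq$ rather than $>$) is a fair nitpick of the corollary's wording, but it does not affect the substance.
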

\begin{proof}
\b{This statement serves as a corollary derived from Theorem \ref{theorem1}, which is established through equations \eqref{eq77}, \eqref{eq1555}, and \eqref{eq16666}, along with the adjacency definition provided in Definition \ref{Def1}. In this context, we denote the indices of the maximum elements of $\mathbf{q}_i$ and $\mathbf{q}_j$ as $r$ and $o$ respectively, where $r = \argmax_l\{q_{il}\}$ and $o = \argmax_l\{q_{jl}\}$.}
\s{\brown{Moreover, from \eqref{eq77}\b{, \eqref{eq1555} and \eqref{eq16666}, and the adjacency definition provided in Definition 1}, we can conclude that if two samples i and j are adjacent, the maximum value among their elements is greater than $\zeta$. In other words:}}
\begin{align}\label{eq122}
    &\zeta \leq \mathbf{q}_i^T\mathbf{q}_j \leq min(q_{ir}, q_{jo}) \; \; \; \xrightarrow[]{}&  \begin{cases}
               \zeta \leq q_{ir}\\
               \zeta \leq q_{jo}
            \end{cases}
\end{align}

\end{proof}

\begin{figure}[h]
  \centering
  \includegraphics[width=0.9\linewidth]{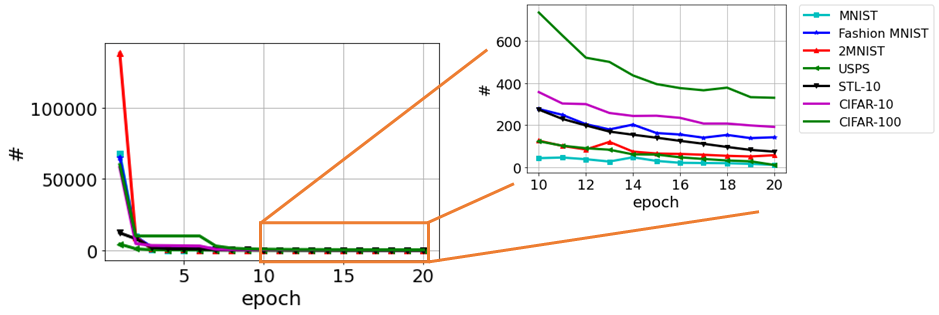}
  \caption{Number of data points that do not have any adjacent neighbors during the DCSS training in the second phase.\s{training of the second phase of DCSS.}}
  \label{adjacent_fig}
\end{figure}

\begin{corollary}\label{col3.1}
 If a data point has at least one adjacent neighbor, the maximum element of its corresponding  $\mathbf{q}$ is greater than $\zeta$.
\end{corollary}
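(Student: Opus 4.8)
The plan is to treat this statement as an immediate specialization of Corollary \ref{col1.1}, or equivalently as a direct consequence of chaining the adjacency definition with Theorem \ref{theorem1}. Since the only hypothesis is that the data point possesses \emph{at least one} adjacent neighbor, the first move is simply to fix such a neighbor and reason with that single pair, without any quantification over all neighbors.

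First I would name the data point of interest $i$ and invoke the hypothesis to select one adjacent neighbor $j$; its existence is exactly what the hypothesis guarantees, and any one such $j$ will do. By Definition \ref{Def1}, the adjacency of $i$ and $j$ means precisely that $\mathbf{q}_i^T\mathbf{q}_j \geq \zeta$, so I would record this as the starting inequality.

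Next I would apply Theorem \ref{theorem1} to the pair $(i,j)$, which gives $\mathbf{q}_i^T\mathbf{q}_j \leq \min\{\max_l\{q_{il}\},\max_l\{q_{jl}\}\}$. Discarding the second term in the minimum leaves the weaker but sufficient bound $\mathbf{q}_i^T\mathbf{q}_j \leq \max_l\{q_{il}\}$. Chaining this with the adjacency inequality yields $\zeta \leq \mathbf{q}_i^T\mathbf{q}_j \leq \max_l\{q_{il}\}$, so the maximum element of $\mathbf{q}_i$ exceeds $\zeta$, which is exactly the claim.

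There is no genuine obstacle here: the result is a one-line corollary once Theorem \ref{theorem1} is available, and its content is essentially the $\zeta \leq q_{ir}$ half of Corollary \ref{col1.1} applied to the single existing pair. The only point meriting a moment's care is logical rather than computational, namely observing that the existence of one adjacent neighbor already suffices, so the bound on the other sample's maximum inside the minimum can be dropped without affecting the conclusion about $\mathbf{q}_i$.
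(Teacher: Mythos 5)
Your proposal is correct and follows essentially the same route as the paper: the paper's proof also fixes one adjacent neighbor $j$ and invokes Corollary \ref{col1.1} (which is itself just Definition \ref{Def1} chained with Theorem \ref{theorem1}), whereas you unfold that corollary inline — a cosmetic difference only. The paper additionally spends space empirically justifying that the hypothesis (existence of an adjacent neighbor) typically holds in practice, but that is motivation rather than part of the logical argument, so omitting it does not create a gap.
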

\begin{proof}
We first empirically test the validity of the employed assumption, i.e. the existence of at least one adjacent (i.e. similar) sample for a data point,  on our datasets.
Fig. \ref{adjacent_fig} shows the number of data samples that are not similar to any other data points in the $\mathbf{q}$ space. As it can be seen, at the beginning of the second phase of DCSS, since MNet is initialized randomly, many data points do not have any adjacent neighbor (i.e. almost $\forall\,\,\, i,j ,  \; i\neq j :\,\, \mathbf{q}_i^T\mathbf{q}_j<\zeta$.). By minimizing (4) and (5) in the second phase of DCSS, similar samples are tightly packed in the $\mathbf{q}$ space; therefore, almost all samples have at least one adjacent neighbor. For example, only 0.5\% of the samples in the CIFAR-100 dataset have no adjacent sample by the end of the training phase.\s{there are only 330 data points, out of the 60,000 samples of the CIFAR-100 dataset, that are not similar to any other data samples at the end of the second phase.} Note that CIFAR-100 presents the worst case among the other datasets shown in Fig \ref{adjacent_fig}. All in all, we can roughly assume that each data point has at least one adjacent sample.

Let us consider an arbitrary data point, i, and one of its adjacent data points, j. From Corollary \ref{col1.1}, we can conclude that:
\begin{align}
     \begin{cases}
               \zeta \leq \max_l\{q_{il}\}\\
               \zeta \leq \max_l\{q_{jl}\}
            \end{cases}.
\end{align}
Thus, we proved that the maximum element of $\mathbf{q}_i$, where i is an arbitrary data point, is greater than $\zeta$.
\end{proof}

\begin{corollary}\label{col3.2}
Assume each data point has at least one adjacent neighbor and $\gamma<\zeta^2$. If two data points, i and k, are dissimilar, i and k are not from the same cluster.
 \begin{proof}
 We prove this corollary by contradiction where the contradiction assumption is: i and k are dissimilar, yet from the same cluster where $\gamma<\zeta^2$.\\ 
 Since i and k are in the same cluster, the index of the maximum element of $\mathbf{q}_i$ and $\mathbf{q}_k$ are the same. (i.e. $\beta = \argmax_l\{q_{il}\}=\argmax_l\{q_{kl}\}$).
 Since each data point has at least one adjacent neighbor, from Corollary \ref{col3.1}, we can conclude that:
 \begin{align}\label{eq27}
     \begin{cases}
               \zeta \leq q_{i\beta}\\
               \zeta \leq q_{k\beta}
            \end{cases}.
 \end{align}
 and we can represent $\mathbf{q}_i^T\mathbf{q}_k$ as follow:
 \begin{align}
     \mathbf{q}_i^T\mathbf{q}_k=q_{i\beta}q_{k\beta}+\sum_{l\neq \beta} q_{il}q_{kl}
 \end{align}
Therefore, $q_{i\beta}q_{k\beta}\leq \mathbf{q}_i^T\mathbf{q}_k$. Also, we know i and k are dissimilar. From \eqref{eq27}, we can conclude that:
\begin{align}\label{eq29}
    \zeta^2 \leq q_{i\beta}q_{k\beta}\leq \mathbf{q}_i^T\mathbf{q}_k \leq \gamma
\end{align}
\eqref{eq29} contradicts the assumption of $\gamma<\zeta^2$. Hence, i and k are in different clusters.
 \end{proof}
\end{corollary}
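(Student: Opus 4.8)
The plan is to argue by contradiction. I assume the negation of the conclusion together with the hypotheses: namely that $i$ and $k$ are dissimilar, so $\mathbf{q}_i^T\mathbf{q}_k \leq \gamma$, that they nonetheless belong to the same cluster, and that $\gamma < \zeta^2$. The goal is to wring a contradiction out of these three facts. First I would use Definition \ref{Def2} to translate ``same cluster'' into a statement about indices: the maximal entries of $\mathbf{q}_i$ and $\mathbf{q}_k$ occur at a common coordinate, which I call $\beta$, so that $\beta = \argmax_l\{q_{il}\} = \argmax_l\{q_{kl}\}$.

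Next I would bring in the standing assumption that every data point has at least one adjacent neighbor, which lets me apply Corollary \ref{col3.1} to each of $i$ and $k$ separately. That corollary guarantees that the maximum $\mathbf{q}$-entry of each point exceeds $\zeta$; since those maxima sit at coordinate $\beta$, this reads $\zeta \leq q_{i\beta}$ and $\zeta \leq q_{k\beta}$. The decisive step is then to bound the inner product from below. Writing out $\mathbf{q}_i^T\mathbf{q}_k = q_{i\beta}q_{k\beta} + \sum_{l \neq \beta} q_{il}q_{kl}$ and using that the soft-max outputs are non-negative, so the remaining sum is at least $0$, I obtain $q_{i\beta}q_{k\beta} \leq \mathbf{q}_i^T\mathbf{q}_k$. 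Combining the two lower bounds with the dissimilarity hypothesis produces the chain $\zeta^2 \leq q_{i\beta}q_{k\beta} \leq \mathbf{q}_i^T\mathbf{q}_k \leq \gamma$, hence $\zeta^2 \leq \gamma$, which contradicts $\gamma < \zeta^2$ and thereby forces $i$ and $k$ into different clusters.

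The only genuinely load-bearing step, and thus the main obstacle, is securing the pair of bounds $\zeta \leq q_{i\beta}$ and $\zeta \leq q_{k\beta}$. These do not hold for an arbitrary soft-max vector; they rest entirely on Corollary \ref{col3.1}, which in turn needs the ``at least one adjacent neighbor'' assumption (justified empirically in the excerpt via Fig. \ref{adjacent_fig}) and the chaining of Corollary \ref{col1.1}. Once those two bounds are in hand, everything else is elementary: non-negativity of the coordinates lets me discard the off-$\beta$ terms, and transitivity of $\leq$ delivers the contradiction.
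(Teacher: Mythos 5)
Your proposal is correct and mirrors the paper's own argument step for step: the same contradiction setup via Definition \ref{Def2}, the same appeal to Corollary \ref{col3.1} to obtain $\zeta \leq q_{i\beta}$ and $\zeta \leq q_{k\beta}$, the same decomposition of the inner product, and the same chain $\zeta^2 \leq q_{i\beta}q_{k\beta} \leq \mathbf{q}_i^T\mathbf{q}_k \leq \gamma$ contradicting $\gamma < \zeta^2$. No substantive differences to report.
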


\begin{theorem} \label{theorem2}
For $\frac{2}{3}\leq \zeta$, if i and j are adjacent, they are in the same cluster -- i.e. r is equal to o where $r = \argmax_l\{q_{il}\}$ and $o = \argmax_l\{q_{jl}\}$.
\end{theorem}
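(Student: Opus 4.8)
The plan is to argue by contradiction. Write $r=\argmax_l\{q_{il}\}$ and $o=\argmax_l\{q_{jl}\}$ as in the statement, and suppose toward a contradiction that $r\neq o$. Since $i$ and $j$ are adjacent by hypothesis, Corollary \ref{col1.1} immediately gives $q_{ir}\geq\zeta$ and $q_{jo}\geq\zeta$; these are the only facts about the peaks I will need.

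The key structural observation is that the two dominant coordinates live in different places. Because each $\mathbf{q}$ is a soft-max output with nonnegative entries and $\ell_1$-norm $1$, the mass of $\mathbf{q}_i$ outside coordinate $r$ is $1-q_{ir}\le 1-\zeta$; in particular, since $o\neq r$, I get $q_{io}\le 1-\zeta$. By the symmetric argument on $\mathbf{q}_j$, I also get $q_{jr}\le 1-\zeta$. This is exactly where the assumption $r\neq o$ enters: the large weight $q_{jo}$ sits on a coordinate where $\mathbf{q}_i$ is forced to be small, and vice versa.

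Next I would bound the inner product to contradict adjacency. Splitting off the $r$-coordinate, $\mathbf{q}_i^T\mathbf{q}_j=q_{ir}q_{jr}+\sum_{l\neq r}q_{il}q_{jl}$, and using $q_{jl}\le q_{jo}$ on the tail together with $q_{jr}\le 1-q_{jo}$, yields $\mathbf{q}_i^T\mathbf{q}_j\le q_{ir}+q_{jo}-2q_{ir}q_{jo}$. Since $q_{ir},q_{jo}\ge\zeta\ge\tfrac23>\tfrac12$, the right-hand side is decreasing in each of $q_{ir}$ and $q_{jo}$, so it is maximized at $q_{ir}=q_{jo}=\zeta$, giving $\mathbf{q}_i^T\mathbf{q}_j\le 2\zeta(1-\zeta)$. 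Finally $2\zeta(1-\zeta)<\zeta\iff\zeta>\tfrac12$, which holds under the hypothesis $\zeta\ge\tfrac23$, so $\mathbf{q}_i^T\mathbf{q}_j<\zeta$ --- contradicting adjacency (Definition \ref{Def1}). Hence $r=o$, i.e. $i$ and $j$ share the same cluster (Definition \ref{Def2}).

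The main obstacle is getting the upper bound on $\mathbf{q}_i^T\mathbf{q}_j$ tight enough: a naive term-by-term estimate that ignores the normalization scatters mass over all $K$ coordinates and only yields a $K$-dependent bound. The crux is to exploit the misalignment of the peaks, controlling the tail by the single largest off-$r$ weight $q_{jo}$ and using $q_{jr}\le 1-q_{jo}$, so that the estimate collapses to the two-variable quantity $q_{ir}+q_{jo}-2q_{ir}q_{jo}$ whose monotonicity does the rest. I would also note that this route in fact proves the claim for every $\zeta>\tfrac12$; the stated threshold $\tfrac23$ is exactly what a looser estimate such as $\mathbf{q}_i^T\mathbf{q}_j\le q_{io}+(1-\zeta)\sum_{l\neq o}q_{il}\le 2(1-\zeta)$ delivers, and either bound suffices to close the argument.
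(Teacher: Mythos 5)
Your proof is correct, and up to the final step it follows the same route as the paper's: both argue by contradiction from $r\neq o$, both invoke Corollary \ref{col1.1} to get $q_{ir}\geq\zeta$ and $q_{jo}\geq\zeta$, and both use the identical chain of estimates ($q_{jl}\leq q_{jo}$ on the tail, $\sum_{l\neq r}q_{il}=1-q_{ir}$, and $q_{jr}\leq 1-q_{jo}$) to arrive at $\mathbf{q}_i^T\mathbf{q}_j\leq q_{ir}(1-q_{jo})+q_{jo}(1-q_{ir})$, exactly as in \eqref{eq12}. Where you diverge is the endgame: the paper coarsens this to $2(1-\zeta)$ by bounding $q_{ir},q_{jo}\leq 1$, which forces the threshold $\zeta>\tfrac{2}{3}$, whereas you observe that $x+y-2xy$ is decreasing in each argument once the other exceeds $\tfrac12$, so the bound is maximized at $x=y=\zeta$ and collapses to $2\zeta(1-\zeta)<\zeta$ for every $\zeta>\tfrac12$. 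Your version therefore proves a strictly stronger statement (the conclusion holds for all $\zeta>\tfrac12$, not just $\zeta\geq\tfrac23$) at no extra cost, and you correctly diagnose that the paper's $\tfrac23$ is an artifact of the looser final estimate rather than an intrinsic threshold. One cosmetic remark: the intermediate facts $q_{io}\leq 1-\zeta$ and $q_{jr}\leq 1-\zeta$ that you state as "the key structural observation" are never actually used in your bound (only $q_{jr}\leq 1-q_{jo}$ is), so they could be dropped without loss.
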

\begin{proof}
First, we find an upper bound for $\mathbf{q}_i^T\mathbf{q}_j$, when the ith and jth samples are adjacent but from different clusters.

\noindent Since i and j are not from the same cluster, we can represent $\mathbf{q}_i^T\mathbf{q}_j$ as follows:
\begin{align}\label{eq12}
    \mathbf{q}_i^T\mathbf{q}_j &=  q_{ir}q_{jr} + q_{io}q_{jo} +\sum_{l\neq o,r} q_{il}q_{jl}\nonumber\\
    \xrightarrow[]{\forall l: \,\,\,q_{jl}\leq q_{jo}}&\leq q_{ir}q_{jr} + q_{io}q_{jo} +\sum_{l\neq o,r} q_{il}q_{jo}\nonumber \\
    &= q_{ir}q_{jr} +q_{jo}\Big(\sum_{l\neq r} q_{il}\Big)\nonumber\\ \xrightarrow[]{\sum_{l\neq r} q_{il} = 1- q_{ir}}&=q_{ir}q_{jr}+q_{jo}(1-q_{ir}) \nonumber\\
    \xrightarrow[]{q_{jr}\leq 1-q_{jo}}&\leq q_{ir}(1-q_{jo})+q_{jo}(1-q_{ir})\nonumber\\\xrightarrow[]{\text{from}\,\,\,\eqref{eq122}}& \leq q_{ir}(1-\zeta)+q_{jo}(1-\zeta)\nonumber \\\xrightarrow[]{ \{q_{ir},\,\, q_{jo} \} \in [0,1]\s{0 \leq q_{ir},\,\, q_{jo}\leq 1}}& \leq 2(1-\zeta).
\end{align}
Note that $q_{jr} \leq 1-q_{jo}$ because $\sum_{l\neq o} q_{jl}+q_{jr}+q_{jo}=1$. Note that all elements of a $\mathbf{q}$ vector are probability values between 0 and 1. 

\noindent Hence, as is shown \eqref{eq12}, if two samples are not from the same cluster, then the inner product of their corresponding $\mathbf{q}$ has an upper bound of $2(1-\zeta)$. Therefore, if two samples i and j are adjacent (see Definition 1) but from different clusters, then:
\begin{align}
    &\zeta \leq \mathbf{q}_i^T\mathbf{q}_j \leq 2(1-\zeta)\nonumber\\ \xrightarrow[]{}& \zeta \leq 2(1-\zeta)\xrightarrow[]{} \zeta \leq \frac{2}{3}.
    \end{align}
    
\noindent Thus, for $\frac{2}{3}< \zeta$, i and j cannot be from two different clusters. In other words, if two samples i and j are adjacent AND the user-settable parameter $\zeta$ is set to a value greater than $\frac{2}{3}$, then the two samples are from similar clusters, i.e., $r=o$. \s{must be in the same cluster.  \r{YOU MUST PROVE IT.} Otherwise, their inner product will not be greater than $\zeta$.} In this paper we set $ \zeta=0.8 > \frac{2}{3}$.
\end{proof}

\begin{corollary}
Assume $\zeta>\frac{2}{3}$. Consider three data points: i, j, and k. If i and j, and also i and k are adjacent, then j and k are from the same cluster.
\end{corollary}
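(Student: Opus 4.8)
The plan is to reduce this corollary to two direct applications of Theorem \ref{theorem2}, exploiting the fact that cluster membership is the shared quantity that can be propagated through the common point $i$. First I would let $r = \argmax_l\{q_{il}\}$ denote the cluster label of the pivot point $i$; this is precisely the index whose equality across the three points I intend to track.

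Next, since $\zeta > \frac{2}{3}$ is assumed and the pair $(i,j)$ is adjacent by hypothesis, Theorem \ref{theorem2} applies directly and yields that $i$ and $j$ lie in the same cluster — that is, $\argmax_l\{q_{jl}\} = r$. Applying the very same theorem to the adjacent pair $(i,k)$ gives $\argmax_l\{q_{kl}\} = r$ as well. Because both maximizing indices coincide with $r$, Definition \ref{Def2} immediately certifies that $j$ and $k$ are from the same cluster, which is exactly the claim.

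There is no genuine analytic obstacle here: the statement is pure transitivity of the ``same-cluster'' relation through the common vertex $i$, with Theorem \ref{theorem2} supplying each of the two edges $(i,j)$ and $(i,k)$. The only point requiring care is that Theorem \ref{theorem2} delivers the strict conclusion $r = o$ under $\frac{2}{3} < \zeta$, so I would make sure the corollary's strict hypothesis $\zeta > \frac{2}{3}$ is invoked cleanly for each adjacent pair. In particular, one should resist the temptation to bound the inner product $\mathbf{q}_j^T\mathbf{q}_k$ directly — no such estimate is needed, and attempting it would merely reintroduce the casework that Theorem \ref{theorem2} has already dispatched.
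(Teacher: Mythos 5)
Your proposal is correct and follows essentially the same route as the paper: two applications of Theorem \ref{theorem2} to the adjacent pairs $(i,j)$ and $(i,k)$, then transitivity of the same-cluster relation through the pivot $i$. No gaps.
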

\begin{proof}
Since i and j (i and k) are adjacent and $\zeta>\frac{2}{3}$, from Theorem \ref{theorem2}, we can conclude that i and j (i and k) are in the same cluster; hence, the three samples i, j, and k all are in the same cluster.
\end{proof}
 
\s{\red{MOHAMMDREZA: YOU SHOULD CLARIFY THE BELOW PARAGRAPHS. IT IS NOT CLEAR WHAT YOU MEAN. \b{I DID IT.}}} 

\begin{theorem} \label{theorem3}
Consider three data points i, j, and k where i and j also i and k are adjacent (aka similar). Assume $\zeta > \frac{2}{3}$. If $\gamma < \zeta^2$, then the two samples j and k are not dissimilar (i.e. $\mathbf{q}_j^T\mathbf{q}_k \nless \gamma$)\s{; therefore, their corresponding pair does not contribute as a dissimilar pair in minimizing the loss function defined in equation \eqref{eq5})}.
\end{theorem}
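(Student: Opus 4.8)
The plan is to prove the slightly stronger inequality $\mathbf{q}_j^T\mathbf{q}_k \geq \zeta^2$, from which $\mathbf{q}_j^T\mathbf{q}_k \geq \zeta^2 > \gamma$ follows at once by the hypothesis $\gamma < \zeta^2$, so that $j$ and $k$ cannot be dissimilar. The first step is to place $j$ and $k$ in a common cluster. Since $\zeta > \frac{2}{3}$ and the pairs $(i,j)$ and $(i,k)$ are both adjacent, Theorem \ref{theorem2} puts $i$ and $j$ in one cluster and $i$ and $k$ in one cluster; transitivity through $i$ (precisely the corollary stated immediately after Theorem \ref{theorem2}) then forces $j$ and $k$ into the same cluster. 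I would denote by $\beta$ their shared maximal index, i.e.\ $\beta = \argmax_l\{q_{jl}\} = \argmax_l\{q_{kl}\}$.

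The second step is to bound the two relevant coordinates from below. Because $i$ is adjacent to both $j$ and $k$, each of $j$ and $k$ has at least one adjacent neighbour, so the hypothesis of Corollary \ref{col3.1} is automatically satisfied for these two points; applying it yields $q_{j\beta} = \max_l\{q_{jl}\} \geq \zeta$ and $q_{k\beta} = \max_l\{q_{kl}\} \geq \zeta$. Writing out the inner product and using nonnegativity of all coordinates together with these two lower bounds,
\begin{align}
\mathbf{q}_j^T\mathbf{q}_k = q_{j\beta}q_{k\beta} + \sum_{l \neq \beta} q_{jl}q_{kl} \geq q_{j\beta}q_{k\beta} \geq \zeta^2,
\end{align}
which gives $\mathbf{q}_j^T\mathbf{q}_k \geq \zeta^2 > \gamma$, i.e.\ $\mathbf{q}_j^T\mathbf{q}_k \nless \gamma$, as claimed.

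I expect no serious obstacle, since this is essentially the contrapositive of Corollary \ref{col3.2}, specialized to points whose same-cluster membership is supplied by the transitivity corollary of Theorem \ref{theorem2}; the chain of inequalities above is exactly the one appearing in \eqref{eq29}. The only point deserving care is verifying that the hypotheses of Corollary \ref{col3.1} genuinely hold for $j$ and $k$ — this is where the adjacency of $i$ to both points is used, guaranteeing each has an adjacent neighbour so that the bound $q_{j\beta}, q_{k\beta} \geq \zeta$ (cf.\ \eqref{eq122}, \eqref{eq27}) is legitimate without invoking the global empirical assumption. I would also note at the outset that the two quantitative hypotheses $\zeta > \frac{2}{3}$ (required for Theorem \ref{theorem2}) and $\gamma < \zeta^2$ (required for the final strict inequality) are mutually consistent, as they are for the paper's setting $\zeta = 0.8$.
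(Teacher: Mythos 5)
Your proposal is correct and follows essentially the same route as the paper's own proof: use Theorem \ref{theorem2} (via the transitivity corollary) to place $j$ and $k$ in the same cluster with a shared maximal index, lower-bound the two maximal coordinates by $\zeta$ using the adjacency of $i$ to each of $j$ and $k$, and conclude $\mathbf{q}_j^T\mathbf{q}_k \geq q_{j\beta}q_{k\beta} \geq \zeta^2 > \gamma$. Your remark that the adjacency of $i$ to both points makes the ``at least one adjacent neighbour'' hypothesis automatic for $j$ and $k$ is a small but welcome tightening over the paper's appeal to the empirically justified global assumption.
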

\begin{proof}
Considering Theorem \ref{theorem2}, i and j and k are in the same cluster. Therefore, we do not want to include the pair of j and k samples as a dissimilar pair when minimizing the loss function defined in equation (5) of the original manuscript.
\brown{\s{In order to not include j and k as a negative pair in the loss function, we should find a lower bound for $q_j^T\mathbf{q}_k$.}}
\s{\brown{Because $\zeta \leq q_i^Tq_j$ and $\zeta \leq q_i^T\mathbf{q}_k$, we can infer \eqref{eq15} from \eqref{eq122}.  }
\begin{align}\label{eq15}
     \begin{cases}
               \zeta \leq \max_l\{q_{jl}\}\\
               \zeta \leq \max_l\{q_{kl}\}
            \end{cases}
\end{align}}

\noindent Since j and k are in the same cluster and knowing that the index of the maximum element in a $\mathbf{q}$ vector shows the cluster of the corresponding sample, we have: 
\begin{align}
    \eta = \argmax_l\{q_{jl}\} = \argmax_l\{q_{kl}\}.
\end{align}
Thus,
\begin{align}
    \mathbf{q}_j^T\mathbf{q}_k= \sum_{l\neq \eta} q_{jl}q_{kl}+q_{j\eta}q_{k\eta}.
\end{align}
Therefore,
\begin{align} \label{eq2222}
\mathbf{q}_j^T\mathbf{q}_k \geq q_{j\eta}q_{k\eta}.
\end{align}

\noindent Since $\zeta \leq \mathbf{q}_i^T\mathbf{q}_j$ and $\zeta \leq \mathbf{q}_i^T\mathbf{q}_k$, we can infer \eqref{eq15} from \eqref{eq122}. 
\begin{align}\label{eq15}
     \begin{cases}
               \zeta \leq q_{j\eta}\\
               \zeta \leq q_{k\eta}
            \end{cases}.
\end{align}
\\
From \eqref{eq2222} and \eqref{eq15}, we can obtain below:
\begin{align}
    \zeta^2 \leq q_{j\eta}q_{k\eta} \leq \mathbf{q}_j^T\mathbf{q}_k.
\end{align}
\s{\brown{Moreover, from Theorem \ref{theorem2}, we know that samples j and k are in the same cluster; therefore, we have:}

\begin{align}
    \zeta^2 \leq \max_l(q_{jl})\max_l(q_{kl}) \leq q_j^T\mathbf{q}_k
\end{align}}
\\
Thus, if we choose $\gamma<\zeta^2$, we will not include the pair of samples j and k as a dissimilar pair\s{two negative pairs} in equation (5) of the main manuscript. 
In this paper, $\gamma$ is set to 0.2, i.e. $\gamma=0.2<0.8^2$.
\end{proof}

\section{Experiments}\label{experiments}
In this section, the effectiveness of our proposed DCSS framework is demonstrated on eight benchmark datasets through conducting a rigorous set of experiments. The DCSS clustering performance on the eight benchmark datasets is compared with seventeen \s{conventional and state-of-the-art deep-learning-based}clustering methods. 

\subsection{Datasets}\label{dset}

The effectiveness of the proposed method is shown on eight widely used datasets. Considering the unsupervised nature of the clustering task, we concatenate training and test sets when applicable. Combining train and test datasets is a common practice in the clustering research field \cite{dec,dkm,idec,dcn,dsc}.
The datasets are:
\\
(1) MNIST \cite{mnist} consists of 60,000 training and 10,000 test gray-scale handwritten images with size $28\times 28$. This dataset has ten classes, i.e. $K=10$.
\\
(2) Fashion MNIST \cite{fashion_mnist} has the same image size and number of samples as MNIST. However, instead of handwritten images, it consists of different types of fashion products. This makes it fairly more complicated for data clustering compared to the MNIST dataset. It has ten classes of data, i.e. $K=10$.
\\
(3) 2MNIST is a more challenging dataset created through the concatenation of the two MNIST and Fashion MNIST datasets. Thus, it has 140,000 gray-scale images from 20 classes, i.e. $K=20$.
\\
(4) USPS \cite{usps} contains of 9,298 $16\times 16$ handwritten images from the USPS postal service. It contains ten classes of data, i.e. $K=10$.
\\
(5) CIFAR-10 \cite{cifar} is comprised of 60,000 RGB images of 10 different items (i.e. $K=10$), where the size of each image is $32\times 32$. 
\\
(6) STL-10 \cite{stl} is a 10-class image recognition dataset comprising of 13,000 $96\times 96$ RGB images. The number of clusters $K$ for this dataset is set to 10.
\\
(7) CIFAR-100 \cite{cifar} is similar to the CIFAR-10, except it has 20 super groups based on the similarity between images instead of 10 classes. The number of clusters $K$ for this dataset is set to 20.
\\
(8) 
ImageNet-10 comprises ten classes from the larger ImageNet dataset \cite{ILSVRC15}. Despite its reduced size, ImageNet-10 preserves the structural and thematic diversity of the original ImageNet dataset.
\\
\\
The network architecture and implementation details of DCSS for each dataset are presented in Section 1 and Section 2 of the supplementary material file, respectively.   

\s{\subsection{Networks Architecture}

\b{The proposed DCSS method includes an autoencoder and a fully connected MNet. This section presents the structure of these networks.}
We use two variations of autoencoders\b{, depending on the dataset nature (i.e. RGB or gray-scale),} when training the proposed DCSS framework. 

\b{For gray-scale datasets, we propose to use an asymmetric autoencoder; where, following \cite{resnet}, we propose to use the bottleneck layer shown in Fig. \ref{AE_image}(c) in the encoder structure. Fig. \ref{AE_image}(a) and (b) respectively show the encoder and decoder structure\b{s} of the proposed asymmetric AE. Employing such an asymmetric structure provides a more discriminative latent space.} Hyperparameters of the proposed AE for each dataset \b{are} indicated in \b{Section} \ref{imp_detail} 

\b{For the RGB datasets, we first apply a ResNet-152, pre-trained on ImageNet \cite{deng2009imagenet}, to extract abstract features. Then, we feed the extracted features to a \brown{symmetric} fully connected AE. \brown{Inspired by \cite{dec}, we set the \b{AE architecture} to 2048-500-500-2000-d for RGB datasets\b{, where} Relu activation function is utilized in all layers.}}  

\brown{ MNet is a fully connected network that takes the \b{d dimensional} latent space of the AE ($\mathbf{u}$ space) as input and generates a K dimensional output $\mathbf{q}$. The architecture of MNet is d-128-128-128-K for all datasets except CIFAR-100. Since CIFAR-100 is a more complicated dataset, it needs a more complex MNet architecture\b{; so we set the MNet architecture for CIFAR-100 to} d-1000-1000-1000-K. Batch normalization and Relu activation functions are utilized for all datasets in all layers of MNet except the last layer in which we used the soft-max function.}

\subsection{Implementation Details}\label{imp_detail}

\brown{In this section, we discuss hyperparameter \b{values and implementation details} of DCSS. }

\brown{Network’s hyperparameters $n,\, p_1,\, s_1,\, p_2,\, s_2, \,p_3,\, s_3,\, f_1,$ and $f_2$ (shown in Fig. \ref{AE_image} ) are respectively set to 28, 2, 2, 1, 2, 2, 2, 5, and 4 for MNIST, Fashion MNIST, and 2MNIST; these parameters are set to  16, 1, 1, 2, 2, 0, 1, 4, and 5 for the USPS dataset. The latent space dimension $d$ is set to 10 for gray-scale images and 20 for RGB images.}

\b{Following \cite{,dkm,dcn, idec, dec}, in order to initialize the parameters of DCSS$_u$'s, i.e. $\theta_e$, $\theta_d$ and $\mu^{(k)}$ for $k=1,\ldots,K$, we train an autoencoder where the end-to-end training is performed by only minimizing the samples reconstruction losses. Adam optimization method \cite{adam}, with the same parameters mentioned in the original paper, is used for training. $\theta_e$, $\theta_d$ are then initialized with the parameters of the trained autoencoder's parameters. We apply the k-means algorithm \cite{kmeans} to the latent space of the trained autoencoder and initialize $\mu^{(k)}, k=1, \ldots, K$ to the centers defined by k-means.}

\brown{\b{For all datasets, in} the first phase of DCSS, $\alpha$, $\text{Maxiter}_1$, $T_1$, and $m$ are respectively set to 0.1, 200, 2, and 1.5.}
\brown{The second phase hyperparameters $\zeta$, $\gamma$, $T_2$, and $\text{MaxIter}_2$ are respectively set to 0.8, 0.2, 5, and 20. We utilize Adam optimizer for updating weights of the AE and MNet, and their learning rates are set to $10^{-5}$ and $10^{-3}$, respectively. } 

\b{All algorithms were implemented in Python using the PyTorch framework. All codes are run on \b{Google Colaboratory} GPU \r{(Tesla K80) with 12GB RAM}. \b{The proposed algorithm codes are included} in the supplementary materials.}}

\subsection{Evaluation Metrics}\label{Metrics}
We utilize two standard metrics to evaluate clustering performance, including clustering accuracy (ACC) \cite{acc} and normalized mutual information (NMI) \cite{nmi}. ACC finds the best mapping between the true and predicted cluster labels. NMI finds a normalized measure of similarity between two different labels of the same data point. The ACC and NMI formulations are shown below:

\begin{subequations} \label{eq6}
\begin{eqnarray}
 & ACC = \max_{map}\frac{\sum_{i=1}^{N}\mathbbm{1}\{l_i=map(c_i)\}}{N} \\
 & NMI = \frac{I(l;c)}{max\{H(l),H(c)\}}
\end{eqnarray}
\end{subequations}
\\
where $l_i$ and $c_i$ denote the true and predicted labels for the data point $\mathbf{x}_i$. $map(.)$ indicates the best mapping between the predicted and true labels of data points. $I(\mathbf{l};\mathbf{c})$ denotes the mutual information between true labels $\mathbf{l}=\{l_1,l_2,...,l_N\}$ and predicted cluster assignments $\mathbf{c}=\{c_1,c_2,...,c_N\}$ for all data points. $H(.)$ presents the entropy function. ACC and NMI range in the interval [0,1], where higher scores indicate higher clustering performance.

\subsection{Clustering Performance} \label{ClusteringPerformance}

\begin{table*}[h]
  \caption{ACC and NMI on the benchmark datasets for different clustering methods. The second best result is shown by *.}
  \label{table1}
  \centering
  \scalebox{0.50}{
    \begin{tabular}{|c||*{9}{c|c|}}\hline
      \backslashbox{Method}{Datasets} & \multicolumn{2}{c|}{MNIST} & \multicolumn{2}{c|}{Fashion MNIST} & \multicolumn{2}{c|}{2MNIST} & \multicolumn{2}{c|}{USPS} & \multicolumn{2}{c|}{CIFAR-10} & \multicolumn{2}{c|}{STL-10} & \multicolumn{2}{c|}{CIFAR-100} & \multicolumn{2}{c|}{ImageNet-10}\\ \hline\hline
      & ACC & NMI & ACC & NMI & ACC & NMI & ACC & NMI & ACC & NMI & ACC & NMI & ACC & NMI & ACC & NMI \\ \hline
      k-means & 53.20 & 50.00 & 47.40 & 51.20 & 32.31 & 44.00 & 65.67 & 62.00 & 22.90 & 8.70 & 19.20 & 12.50 & 13.00 & 8.40 & 24.1 & 11.90  \\ \hline
      LSSC & 71.40 & 70.60 & 49.60 & 49.70 & 39.77 & 51.22 & 63.14 & 58.94 & 21.14 & 10.89 & 18.75 & 11.68 & 14.60 & 7.92 & - & -\\ \hline
      LPMF & 47.10 & 45.20 & 43.40 & 42.50 & 34.68 & 38.69 & 60.82 & 54.47 & 19.10 & 8.10 & 18.00 & 9.60 & 11.80 & 7.90 & - & - \\ \hline
      DEC & 84.30 & 83.72 & 51.80 & 54.63 & 41.20 & 53.12 & 75.81 & 76.91 & 30.10 & 25.70 & 35.90 & 27.60 & 18.50 & 13.60  & 38.10 & 28.20 \\ \hline
      IDEC & 88.13 & 83.81 & 52.90 & 55.70 & 40.42 & 53.56 & 75.86 & 77.68 & 36.99 & 32.53 & 32.53 & 18.85 & 19.61 & 14.58 & 39.40 & 29.00\\ \hline
      DCN & 83.00 & 81.00 & 51.22 & 55.47 & 41.35 & 46.89 & 73.00 & 71.90 & 30.47 & 24.58 & 33.84 & 24.12 & 20.17 & 12.54 & 37.40 & 27.30\\ \hline
      DKM & 84.00 & 81.54 & 51.31 & 55.57 & 41.75 & 46.58 & 75.70 & 77.60 & 35.26 & 26.12 & 32.61 & 29.12 & 18.14 & 12.30 & 38.10 & 29.80\\ \hline
      VaDE & 94.50 & 87.60 & 50.39 & 59.63 & 40.35 & 58.37 & 56.60 & 51.20 & 29.10 & 24.50 & 28.10 & 20.00 & 15.20 & 10.80 & 33.40 & 19.30\\ \hline
      DAC & 97.75*& 93.51* & 62.80&58.90 & - & - & -& - & 52.18&39.59& 46.99 & 36.56 & 23.75 & 18.52 & 52.70 & 39.40\\ \hline 
      GANMM & 64.00 & 61.00 & 34.00 & 27.00 & - & - & 50.12 & 49.35 & - & - & - & - & - & - & - & -\\ \hline
      CC & 88.56 & 84.21 & 64.52 & 61.45 & 42.15 & 58.89 & 81.21 & 79.45 & 77.00 & 67.80 & 85.00 & 76.40 & 42.30 & 42.10 & 89.30 & 85.00\\ \hline
      PICA & - & - & - & - & - & - & - & - & 69.60 & 59.10 & 71.30 & 61.10 & 33.70 & 31.00 & 87.00 & 80.20\\ \hline
      EDESC & 91.30 & 86.20 & 63.10* & 67.00* & - & - & - & - & 62.70 & 46.40 & 74.50 & 68.70 & 38.50 & 37.00 &-&-\\ \hline
      IDFD & - & - & - & - & - & - & - & - & 81.50 & 71.10 & 75.60 & 64.30 & 42.50 & 42.60 & 95.40* & 89.80*\\ \hline
      MICE & - & - & - & - & - & - & - & - & 83.50* & 73.70* & 75.20 & 63.50 & 44.00 & 43.60 & - & -\\ \hline 
     DCCM & - & - & - & - & - & - & 68.60 & 67.50 & 62.30& 49.60 & 48.20 & 37.60 & 32.70 & 28.50 &  71.00 & 60.80 \\ 
      \hline \hline
      AE + k-means & 86.03 & 80.25 & 57.94 & 57.15 & 44.01 & 62.80 & 75.11 & 74.45 & 80.23 & 68.65 & 85.29 & 76.00 & 43.81 & 42.86 & 87.23 & 85.22 \\  \hline
      DCSS$_u$ & 95.99 & 89.95 & 62.90 & 63.58 & 45.31* & 63.00* & 82.93* & 81.84* & 82.43 & 71.32 & 86.47* & 76.52* & 44.08* & 43.70* & 89.46 & 86.30\\ \hline
      DCSS & \textbf{98.00} & \textbf{94.71} & \textbf{66.40} & \textbf{67.10} & \textbf{48.57} & \textbf{67.80} & \textbf{87.21} & \textbf{86.10} & \textbf{84.16} & \textbf{75.09} & \textbf{87.91} & \textbf{77.39} & \textbf{45.10} & \textbf{44.51} & \textbf{95.70} & \textbf{90.52}\\ \hline
    \end{tabular}
  }
\end{table*}

The effectiveness of our proposed DCSS method is compared against seventeen well-known algorithms, including conventional and state-of-the-art deep-learning-based clustering methods, using the commonly used evaluation metrics ACC and NMI, defined in Section \ref{Metrics}.

The conventional clustering methods are k-means \cite{kmeans}, large-scale spectral clustering (LSSC) \cite{lssc}, and locality preserving non-negative matrix factorization (LPMF) \cite{lpmf}. Deeplearning-based algorithms are deep embedding clustering (DEC) \cite{dec}, improved deep embedding clustering (IDEC) \cite{idec}, deep clustering network (DCN) \cite{dcn}, deep k-means (DKM) \cite{dkm}, variational deep embedding (VaDE) \cite{vade}, GAN mixture model for clustering (GANMM) \cite{ganmm}, deep adaptive clustering (DAC) \cite{dacc}, and the very recent clustering methods such as contrastive clustering (CC) \cite{cc}, deep semantic clustering by partition confidence maximization (PICA) \cite{pica}, efficient deep embedded subspace clustering (EDESC) \cite{edesc}, instance discrimination and feature decorrelation (IDFD) \cite{idfd}, Mixture of contrastive experts for unsupervised image clustering (MICE) \cite{mice}, and deep comprehensive correlation mining (DCCM) \cite{dccm}. 
In addition, we report the clustering performance of a baseline method AE + k-means in which k-means is simply applied to the latent representation of an AE that has a similar architecture as the AE used in the DCSS method, trained based on minimizing the dataset reconstruction loss. More details about the comparing algorithms can be found in Section \ref{RL work}.

We also demonstrate the success of the first phase of DCSS, presented in Section \ref{step1}, in creating the reliable subspace $\mathbf{u}$ in which the data points form hypersphere-like clusters\s{are effectively gathered} around their corresponding cluster center. To this end, we only implement the first phase of the DCSS algorithm -- i.e., we train the DCSS's AE through minimizing the loss function presented in (\ref{new_eq1}), where the AE architecture and its initialization are similar to those presented in Section 1 of the supplementary material file. After training the $\mathbf{u}$ space, we perform a crisp cluster assignment by considering each data hypersphere-like group in the $\mathbf{u}$ space as a data cluster and assigning each data point to the one with the closest center. In the following tables and figures, clustering using only the first phase is shown as DCSS$_u$. A preliminary version of DCSS$_u$ is presented in \cite{ijcnn}. 

The clustering performance of DCSS$_u$ and DCSS, along with the comparison algorithms, are shown in Table \ref{table1}. For the comparison methods, if the ACC and NMI of a dataset are not reported in the corresponding original paper, we ran the released code with the same hyper-parameters discussed in the original paper. When the code is not publicly available or not applicable to the dataset, we put dash marks (-) instead of the corresponding
results. The best result for each dataset is shown in bold. The second top results are shown with *.

Several observations can be made from Table \ref{table1}: (1) The proposed DCSS method outperforms all of our comparison methods on all datasets. (2) The first phase of DCSS (shown as DCSS$_u$) effectively groups the data points around their corresponding centers. This can be inferred from DCSS$_u$'s ACC and NMI values. When measuring ACC, DCSS$_u$ outperforms other methods in five out of seven datasets, and it exhibits even stronger performance by outperforming competitors in six out of seven datasets when considering NMI.\s{Indeed, DCSS$_u$ outperforms all the comparison clustering methods except DSC which is one of the very most recent state-of-the-art AE-based clustering methods. DCSS$_u$ outperforms DSC in 4 out of the seven datasets and provides competitive results on the remaining three ones.} (3) Effectiveness of the self-supervision with similar and dissimilar pairs of samples can be inferred by comparing DCSS with DCSS$_u$. It can be seen that DCSS significantly outperforms DCSS$_u$ on all datasets. (4) Effectiveness of the AE's loss function proposed in equation (\ref{new_eq1}) compared to the case of training AE with only the reconstruction loss can be inferred by comparing the DCSS$_u$ performance with the baseline method AE+k-means. As can be seen, the DCSS$_u$ clearly outperforms AE+k-means on all datasets.

\subsection{t-SNE visualization} \label{t-SNE visualization}

\begin{figure*}[h]
  \centering
  \includegraphics[width=0.80\linewidth]{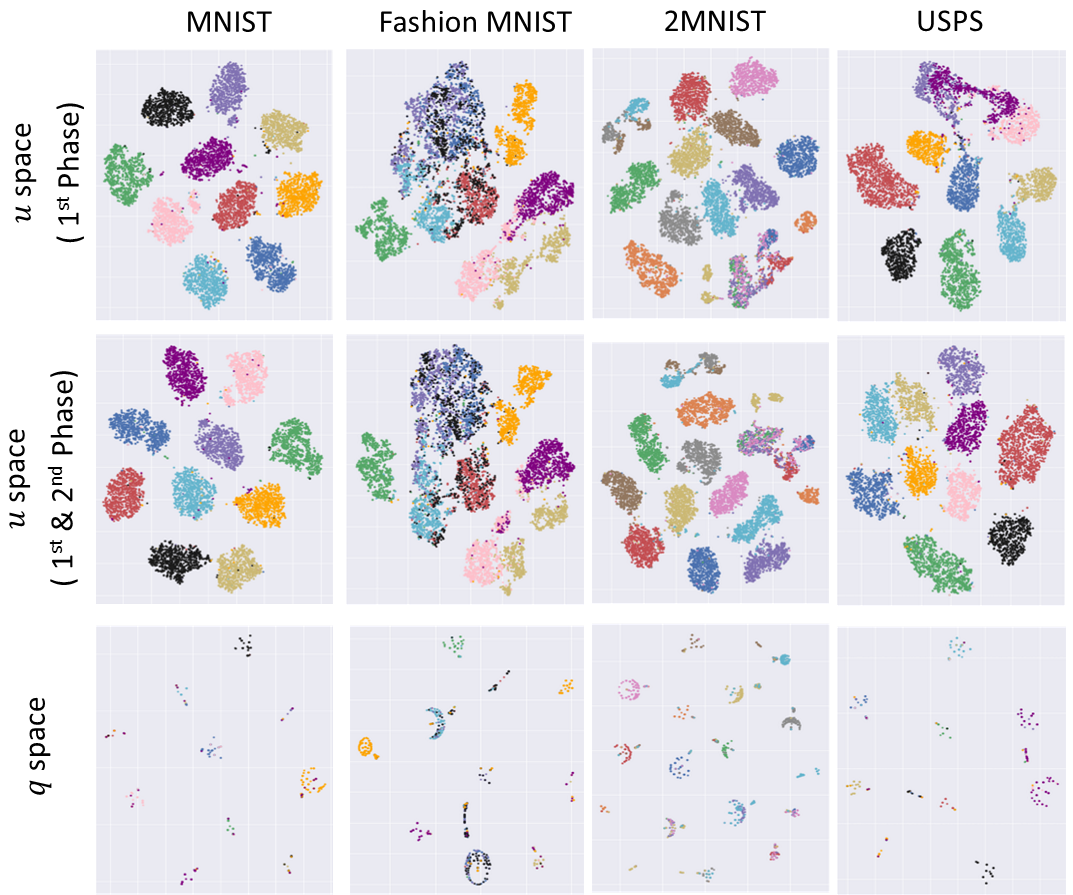}
  \caption{Clustering visualization of different phases of DCSS using t-SNE for different benchmark datasets. For reference, the visualization for the baseline model AE+k-means is shown in the first row.\s{The \b{second} \r{Mohammadreza: in both text and caption, you are referring to wrong row numbers.   Explanation of the first row AE+kemans is also missed! Fix them all.} The first row shows the grouping result of the first phase of DCSS, where a sample sits close to its corresponding group center by minimizing weighted reconstruction and centering losses. The final $\mathbf{u}$ space of DCSS, obtained by completing the first and second phases of DCSS, is shown in the second row. In the second phase, DCSS aims to refine the $\mathbf{u}$ space (obtained in the first phase) and train the $\mathbf{q}$ space by employing pairwise similarity between data points. The last row depicts the final output of MNet ($\mathbf{q}$ space) for all data points.} Axes range from -100 to 100.}
  
\label{TSNE}
\end{figure*}

Fig. \ref{TSNE} illustrates the effectiveness of different phases of our proposed DCSS framework for all datasets, where t-SNE \cite{tsne} is used to map the output of DCSS's encoder and MNet to a 2D space. The different colors correspond to the different data clusters. 

The first row of Fig. \ref{TSNE} shows the representation of different data points in the $\mathbf{u}$ space, i.e. the latent space of the DCSS's AE, only after completing the first phase discussed in Section \ref{step1}. As it can be seen, after completing the first phase of DCSS, different clusters of data points are fairly separated, sit near their corresponding centers, and form spheres; however, not all clusters are well separated. For example, in the USPS dataset, the data clusters shown in pink, purple, and magenta are mixed together. This indicates the insufficiency of the reconstruction and centering losses for the clustering task.

The second row of Fig. \ref{TSNE} shows the data representations in the $\mathbf{u}$ space after completing the second phase of DCSS discussed in Section \ref{2&3}, where $\mathbf{u}$ is refined by minimizing \eqref{eq4} and \eqref{eq5}. As it can be seen, refining the $\mathbf{u}$ space employing pairwise similarities results in more \b{dense} and separate cluster distributions.
For example, the pink, purple, and magenta clusters of USPS are now well distinguishable in the new refined $\mathbf{u}$ space. As another example, see samples of the three clusters shown in red, olive, and brown of the 2MNIST dataset. These clusters are more separable in the refined $\mathbf{u}$ space compared to the corresponding representation shown in the first row.

The last row in Fig. \ref{TSNE} depicts the output space of MNet (i.e. the $\mathbf{q}$ space), in which we make decisions about final cluster assignments of data points. As is expected, clusters in this space have low within- and high between-cluster distances, and cluster distributions can take non-hypersphere patterns. As an example, consider the cyan and the purple clusters of the Fashion MNIST. These clusters are mixed in the $\mathbf{u}$ space, but they are completely isolated in the $\mathbf{q}$ space.

\subsection{Loss function convergence}
\begin{figure*}[t]
  \centering
  \includegraphics[width=1\linewidth]{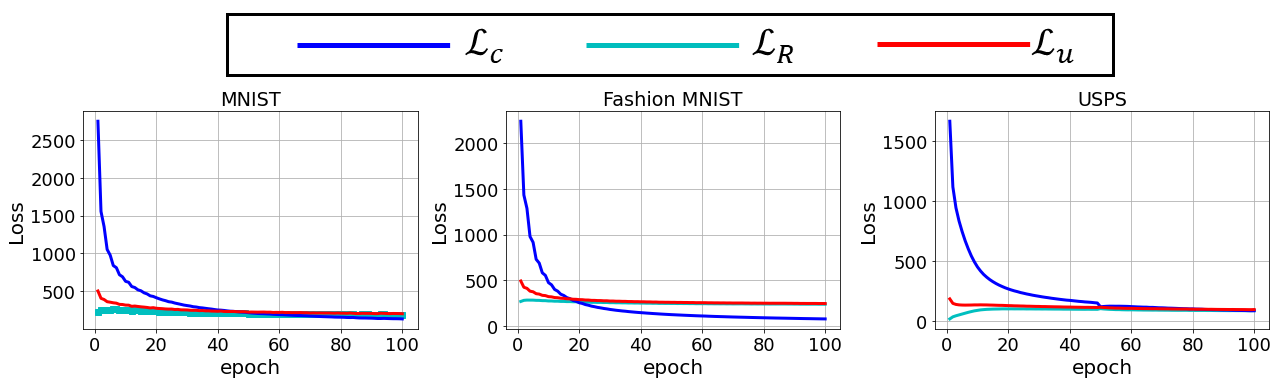}
  \caption{The reconstruction loss $\mathcal{L}_r$, centering loss $\mathcal{L}_c$, and total loss $\mathcal{L}_u$ of the first phase of DCSS vs. training epochs, for different datasets.\s{ At the earlier epochs, the latent representation of the data points are irregularly scattered around the group centers, which causes the high centering loss value. During the first step of DCSS, our proposed algorithm aims to simultaneously minimize centering and reconstruction losses; this leads to an increase in the reconstruction loss (and a decrease in the centering loss), in the later training epochs. When the first phase is complete, the latent representation of data points sit close to the group centers, which causes convergence of centering loss to a small value. }}
\label{loss_first}
\end{figure*}

\begin{figure}[t]
  \centering
  \includegraphics[width=0.5\linewidth]{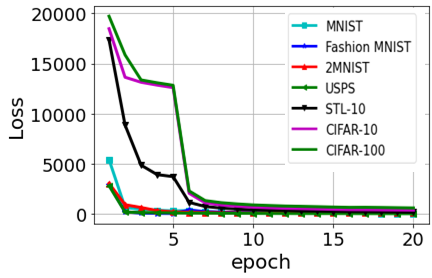}
  \caption{The second phase's loss function of DCSS for different benchmark datasets. \s{In the first $T_2$ epochs, when MNet is na\"ive, we make use of the $\mathbf{u}$ space to measure pairwise similarities; hence, the first $T_2=5$ epochs show the loss value defined in \eqref{eq4}. After the first $T_2$ epochs, all datasets show an eye-catching decrease in the loss values since we switch from using the $\mathbf{u}$ space to using the more reliable space $\mathbf{q}$ when measuring pairwise similarities by minimizing the loss defined in \eqref{eq5}.}  }  
\label{loss_second}
\end{figure}

Fig. \ref{loss_first}. depicts the average, over different clusters on different batches of data points, of the reconstruction, centering, and total losses corresponding to the first phase of DCSS (i.e. DCSS$_u$) shown in (\ref{new_eq1}). As can be seen, all losses are converged at the end of training. The noticeable reduction in the centering loss shows the effectiveness of our proposed approach in creating a reliable $\mathbf{u}$ space in which the data points are gathered around the centers.\s{training procedure in gathering data points near \o{group} centers.} Moreover, the figures show that at the first training epochs, our method trades the reconstruction loss for improved centering performance. This proves the insufficiency of the reconstruction loss in creating a reliable latent space for data clustering. 

In Fig. \ref{loss_second}, we investigate the convergence of the second phase losses, shown in equations (\ref{eq4}) and (\ref{eq5}). Since we initialize the MNet randomly, at the first few epochs, MNet has little knowledge about the lower-dimension representation of the data points in the $\mathbf{q}$ space; thus, we face a high loss value. As the training process progresses, the loss value drops and converges to zero at the end of the training process. In the first $T_2$ epochs ($T_2=5$), the algorithm minimizes the loss presented in (4). It minimizes (5) in the remaining epochs. The continuity of the loss reduction over epochs, along with the sharp loss drop at the 5th epoch, confirms the effectiveness of our proposed strategy in employing $\mathbf{u}$ for similarity measurements in the early epochs and then $\mathbf{q}$ in the later epochs.

%


%

\subsection{Performance on Imbalanced Dataset}

\begin{figure}[h]
  \centering
  \includegraphics[width=1\linewidth]{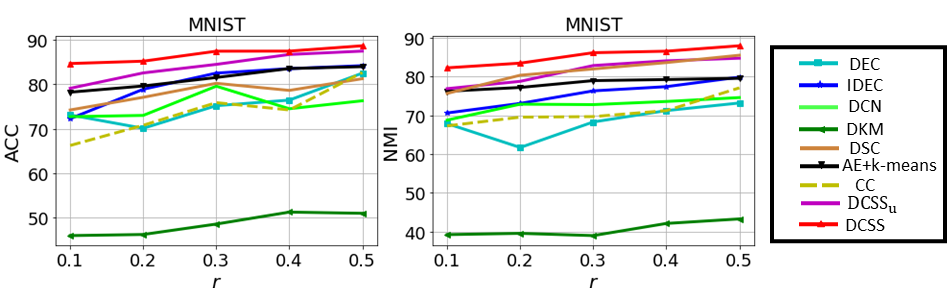}
  \caption{The clustering performance of different methods on imbalanced samples of MNIST.}
\label{imbalanced}
\end{figure}

To demonstrate the effectiveness of our proposed DCSS method on an imbalanced dataset, we randomly collect five subsets of the MNIST dataset with different retention rates $r\in\{0.1,0.2,0.3,0.4,0.5\}$, where samples of the first class are chosen with the probability of $r$ and the last class with probability of 1, with the other classes linearly in between. Hence, on average, the number of samples for the first cluster is $r$ times less than that of the last cluster. As is shown in Fig. \ref{imbalanced}, our proposed DCSS framework significantly outperforms our comparison methods for all r values. This indicates the robustness of DCSS on imbalanced data. As is expected, in general, for all methods, increasing $r$ results in a higher performance because the dataset gets closer to a balanced one. Higher performance of DCSS on imbalanced datasets can be associated with two factors: (1) considering an individual loss for every cluster in the 1st phase, and (2) considering the pairwise data relations.

\subsection{Visualization of $\mathbf{q}$ vectors}
\begin{figure*}[t]
  \centering
  \includegraphics[width=0.9\linewidth]{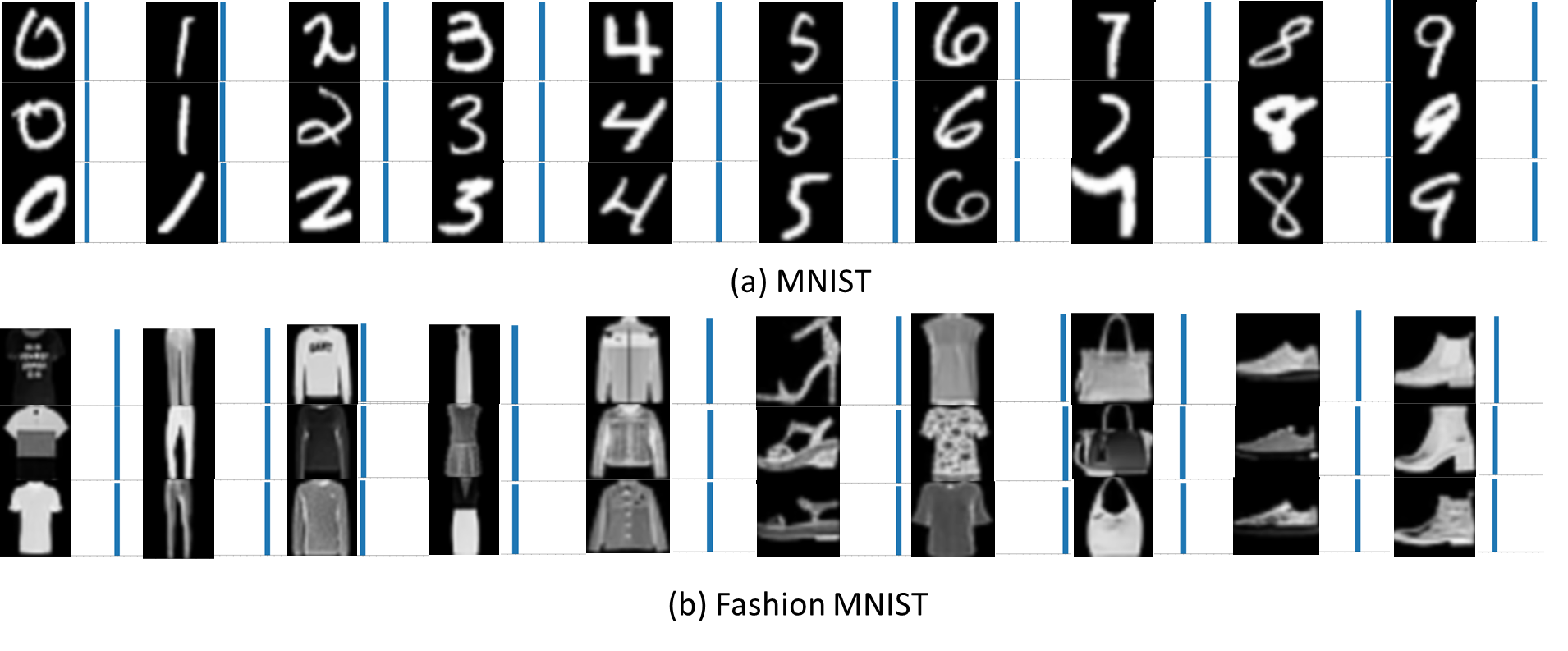}
  \caption{Visualization of $\mathbf{q}$ for samples from (a) MNIST and (b) Fashion MNIST datasets. The $\mathbf{q}$ vector for each image is depicted beside the image.  The vertical axes range from 0 to 1.}
\label{dataset_visualization}
\end{figure*}

\begin{figure}[h]
  \centering
  \includegraphics[width=1\linewidth]{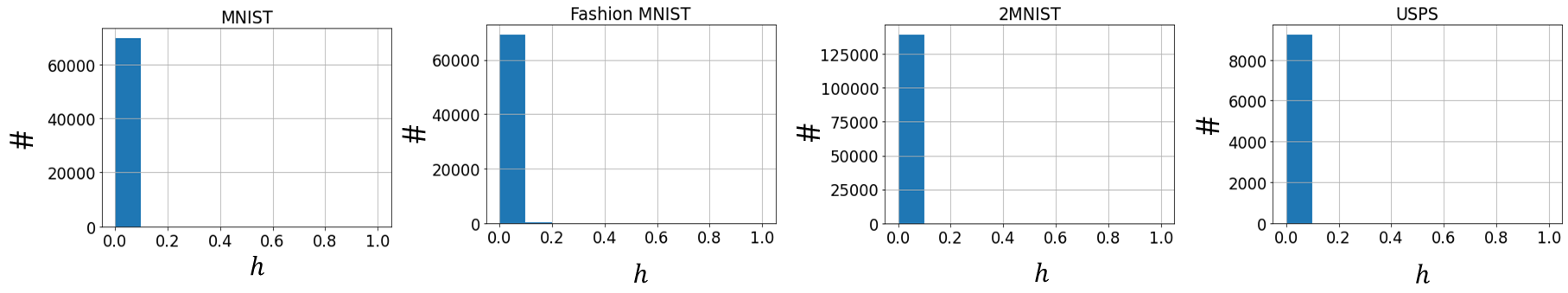}
  \caption{Histogram plot of $h_i= ||\mathbf{I}_i-\mathbf{q}_i||_1, \, i=1,\ldots, N$ where $\mathbf{I}_i$ is the one-hot crisp assignment corresponding to $\mathbf{q}_i$.}

\label{histogram}
\end{figure}

Fig. \ref{dataset_visualization} shows the representations of the data points from various clusters in the $\mathbf{q}$ space. As can be seen, the proposed DCSS method results in representations that are very close to the one-hot vectors. Note that the kth element of $\mathbf{q}_i$ denotes the probability of sample $\mathbf{x}_i$ being in the kth data cluster. The closer $\mathbf{q}_i$ is to the one-hot vector, the more confidently a crisp cluster assignment can be made. As is proved in Corollary 1.2 of Section 3.4, if data point $\mathbf{x}_i$ has at least one similar neighbor, the maximum element of $\mathbf{q}_i$ is greater than $\zeta$. In our experiments, $\zeta$ is set to 0.8. This can justify the aggregation of the data points near the one-hot vectors in the $\mathbf{q}$ space.

To further demonstrate convergence of the $\mathbf{q}$ representations to one-hot vectors, histogram of residuals $h_i= ||\mathbf{I}_i-\mathbf{q}_i||_1, \, i=1,\ldots, N$ for all datasets are shown in Fig. \ref{histogram}; where $||.||_1$ indicates the $\ell_1$-norm and $\mathbf{I}_i$ is the one-hot crisp assignment corresponding to $\mathbf{q}_i$ -- i.e. the index of the non-zero element of $\mathbf{I}_i$ is equal to the index of the maximum element of $\mathbf{q}_i$.
As can be seen in Fig. \ref{histogram}, the representation of almost all data points in the $\mathbf{q}$ space is very close to their corresponding one-hot vector. 

\subsection{\b{Ablation Studies}}
\subsubsection{Effectiveness of the First Phase}
\begin{table*}[h]
  \caption{ACC and NMI on the benchmark datasets for different methods.} \label{newtable}
  \centering
  \scalebox{0.6}{
    \begin{tabular}{|c||*{7}{c|c|}}\hline
      \backslashbox[3em]{Method}{Datasets}
      & \multicolumn{2}{c|}{MNIST} & \multicolumn{2}{c|}{Fashion MNIST} & \multicolumn{2}{c|}{2MNIST} & \multicolumn{2}{c|}{USPS} & \multicolumn{2}{c|}{CIFAR-10} & \multicolumn{2}{c|}{STL-10} & \multicolumn{2}{c|}{CIFAR-100} \\\hline\hline
      
      & ACC & NMI & ACC & NMI & ACC & NMI & ACC & NMI & ACC & NMI & ACC & NMI & ACC & NMI \\\hline
      
      
      DCSS$_{agg}$ & 87.51 & 81.35 & 59.61 & 58.47 & 45.21 & 62.90 & 76.38 & 75.17 & 81.39 & 70.77 & 85.60 & 76.10 & 43.71 & 42.90 \\\hline
      
      DCSS$_u$ & 95.99 & 89.95 & 62.90 & 63.58 & 45.31 & 63.00 & 82.93 & 81.84 & 82.43 & 71.32 & 86.47 & 76.52 & 44.08 & 43.70 \\\hline
    \end{tabular}
  }
\end{table*}
To demonstrate the efficacy of introducing cluster-specific losses and the adopted approach in iteratively updating the network's parameters over K successive runs, as is discussed in Section \ref{step1}, we present a comparative analysis against an alternative approach where all loss terms are aggregated in the initial phase, followed by a single backward pass to update the network parameters at once. The results are presented in Table \ref{newtable} as DCSS$_{agg}$, highlighting a consistent trend. Across all experiments and datasets, the performance of DCSS$_u$ surpasses that of DCSS$_{agg}$  showing the advantages of employing cluster-specific loss functions and iteratively updating the network parameters in K successive runs. \b{To ensure a fair comparison, in all experiments, the number of iterations of DCSS$_{agg}$ is K times greater than the number of iterations in DCSS$_u$.}

\begin{table*}[h]
\caption{ACC and NMI with different extracted features using SimCLR. The second best result is shown by *.}
\label{table2}
\centering 
\begin{tabular}{l c | c | c| c | c } 
\hline
Dataset & Method & \multicolumn{2}{c|}{SimCLR (Z space)} & \multicolumn{2}{c|}{SimCLR (H space)} \\
\hline
& & ACC & NMI & ACC & NMI \\
\hline
\multirow{7}{*}{STL-10} & DEC & 84.26 & 75.14 & 85.66 & 76.13 \\
& IDEC & 84.55 & 75.64 & 85.90 & 76.44 \\
& DCN & 81.26 & 71.23 & 82.84 & 73.35 \\
& DKM & 83.14 & 74.93 & 83.64 & 74.85 \\
& AE$+$k-means & 84.59 & 75.39 & 85.29 & 76.00 \\
& DCSS$_u$ & 85.11* & 76.31* & 86.47* & 76.52* \\
& DCSS & \textbf{86.66} & \textbf{76.66} & \textbf{87.91} & \textbf{77.39} \\
\hline
\multirow{7}{*}{CIFAR-10} & DEC & 76.60 & 66.91 & 81.59 & 70.00 \\
& IDEC & 76.78 & 67.02 & 81.89 & 70.49 \\
& DCN & 73.00 & 62.35 & 78.05 & 67.82 \\
& DKM & 76.99 & 66.52 & 81.19 & 69.47 \\
& AE$+$k-means & 75.13 & 65.98 & 80.23 & 68.65 \\
& DCSS$_u$ & 77.21* & 68.00* & 82.43* & 71.32* \\
& DCSS & \textbf{77.55} & \textbf{68.91} & \textbf{84.16} & \textbf{75.09} \\
\hline
\multirow{7}{*}{CIFAR-100} & DEC & 42.42 & 42.22 & 43.75 & 42.55 \\
& IDEC & 41.83 & 41.20 & 43.87 & 42.73 \\
& DCN & 40.88 & 40.13 & 42.11 & 41.59 \\
& DKM & 42.36 & 42.16 & 43.40 & 42.90 \\
& AE$+$k-means & 42.56 & 42.26 & 43.81 & 42.86 \\
& DCSS$_u$ & 43.12* & 42.88* & 44.08* & 43.70* \\
& DCSS & \textbf{43.43} & \textbf{43.18} & \textbf{45.10} & \textbf{44.51} \\
\hline
\end{tabular}
\end{table*}
\subsubsection{Effect of input features}

To make sure our improved performance isn't solely due to using features from SimCLR as input for our algorithm, we conducted an additional experiment which presents the performance of the other deep learning-based methods that can accept the extracted features, here SimCLR, as the input of their algorithm. To this end, we evaluated the clustering performance using two types of features from SimCLR. In the first experiment, named SimCLR (Z space), we used the output of the projection head as input for DCSS and the other algorithms. In the second experiment, SimCLR (H space), we used the output of ResNet-34 (i.e. the SimCLR's backbone), after the average pooling layer, as input. The results are shown in Table \ref{table2}. The best results for each dataset are in bold, and the second-best results are marked with an asterisk (*). Notably, our DCSS method consistently outperforms all other algorithms across all datasets. Additionally, DCSS$_u$ consistently ranks as the second-best method in all experiments.

\subsubsection{DCSS as a General Framework}\label{general_framework}
\begin{table*}[h]
  \caption{ACC and NMI on the benchmark datasets when employing DCSS as a general framework to improve state-of-the-art AE-based clustering methods.}
  \label{table3}
  \centering
  \scalebox{0.6}{
    \begin{tabular}{|c||*{16}{c|}}
      \hline
      \backslashbox[4em]{Method}{Datasets} & \multicolumn{2}{c|}{MNIST} & \multicolumn{2}{c|}{Fashion MNIST} & \multicolumn{2}{c|}{2MNIST} & \multicolumn{2}{c|}{USPS} & \multicolumn{2}{c|}{CIFAR-10} & \multicolumn{2}{c|}{STL-10} & \multicolumn{2}{c|}{CIFAR-100} \\
      \hline\hline
      & ACC & NMI & ACC & NMI & ACC & NMI & ACC & NMI & ACC & NMI & ACC & NMI & ACC & NMI \\
      \hline
      DEC+MNet & 89.13 & 86.97 & 61.25 & 56.30 & 44.25 & 57.35 & 77.58 & 78.15 & 81.88 & 70.79 & 85.97 & 76.33 & 44.10 & 42.81 \\
      IDEC+MNet & 90.51 & 85.42 & 60.12 & 57.16 & 44.83 & 58.00 & 76.58 & 78.14 & 82.26 & 70.86 & 86.11 & 76.35 & 44.21 & 43.01 \\
      DCN+MNet & 87.49 & 83.25 & 54.23 & 58.69 & 45.62 & 48.24 & 76.90 & 77.59 & 79.13 & 68.51 & 83.11 & 74.00 & 43.21 & 41.86 \\
      DKM+MNet & 88.31 & 84.52 & 57.23 & 56.26 & 44.34 & 49.50 & 77.13 & 78.02 & 81.30 & 70.63 & 83.75 & 75.01 & 43.78 & 43.19 \\
      \hline
    \end{tabular}
  }
\end{table*}






In this section, we demonstrate the effectiveness of the DCSS method as a general framework where the $\mathbf{u}$ space is trained with other AE-based clustering techniques. To this end, we substitute the first phase, presented in Section \ref{step1}, with other deep learning-based techniques that train an effective subspace using an AE for the purpose of data clustering. Among our comparison methods, DEC, IDEC, DCN, and DKM algorithms are AE-based. For each dataset, we train AEs using these algorithms, then take their encoder part and append our proposed MNet to the latent space. Then, we run the second phase of DCSS. Results of such implementation are reported in Table \ref{table3} where X+MNet indicates the performance of DCSS employing the X method's latent space as the DCSS's $\mathbf{u}$ space. Note that, for the RGB datasets, features are constructed using the pre-trained SimCLR (H space).

Comparing the clustering results reported in Table \ref{table1}, Table \ref{table2}, and Table \ref{table3} confirms the effectiveness of DCSS as a general framework to improve the existing state-of-the-art AE-based clustering methods. On average, MNet improves the clustering performance of DEC, IDEC, DCN, and DKM respectively by 2.68\% (1.66\%), 2.23\% (1.21\%), 2.58\% (1.35\%), and 2.12\% (1.23\%) in terms of ACC (NMI). 

\subsubsection{\b{Effectiveness of MNet in Improving Self-supervised Learners}}

\b{In this section, we evaluate the efficacy of MNet in enhancing MoCo's performance on CIFAR-10 and CIFAR-100 datasets for clustering tasks. Table \ref{rr} presents the outcomes of applying MNet to features extracted by a self-supervised learners, with updates to the encoder network and MNet parameters during training using the discussed loss in Section \ref{2&3}. Since MoCo is primarily designed for representation learning rather than clustering, we initially apply the k-means algorithm to assess ACC and NMI, denoted as MoCo + Kmeans in Table \ref{rr}. Subsequently, we compute ACC and NMI in the latent space of MNet, represented as MoCo + MNet in the same table.}

\b{The comparison between MoCo + MNet and MoCo + Kmeans clearly demonstrates the effectiveness of MNet in enhancing ACC and NMI for CIFAR-10 and CIFAR-100. This experiment highlights that instead of relying solely on the initial phase, a self-supervised learner like MoCo could be substituted, with MNet having the potential to improve their performance. This is due to MNet's ability to explore relationships among samples within the batch, a feature not present in MoCo's design. }

\begin{table*}[h]
  \caption{\b{ACC and NMI on CIFAR-10 and CIFAR-100 when employing MNet to improve MoCo.}}
  \label{rr}
  \centering
  \scalebox{0.8}{
    \begin{tabular}{|c||*{16}{c|}}
      \hline
      \backslashbox[4em]{Method}{Datasets} &  \multicolumn{2}{c|}{CIFAR-10} & \multicolumn{2}{c|}{CIFAR-100} \\
      \hline\hline
      & ACC & NMI & ACC & NMI  \\
      \hline
      MoCo + Kmeans & 74.11 &  66.22 & 43.18 & 42.75\\ \hline
      MoCo + MNet & 76.24 & 67.00 & 44.01 & 43.22\\
      \hline
    \end{tabular}
  }
\end{table*}

\subsubsection{Hyperparameters Sensitivity}
\begin{figure*}[h]
  \centering
  \includegraphics[width=1\linewidth]{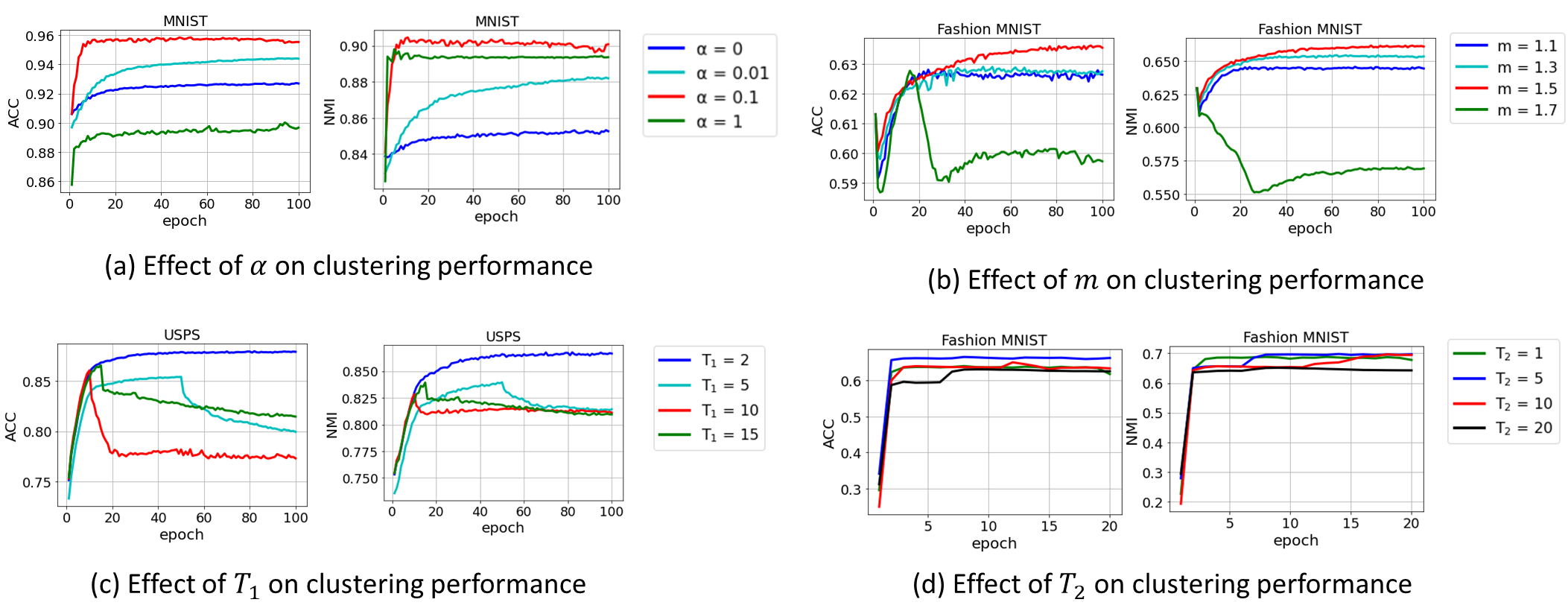}
  \caption{Sensitivity of DCSS to different hyperparameters.}
\label{hyperparameters}
\end{figure*}

\begin{figure*}[h]
  \centering
  \includegraphics[width=1\linewidth]{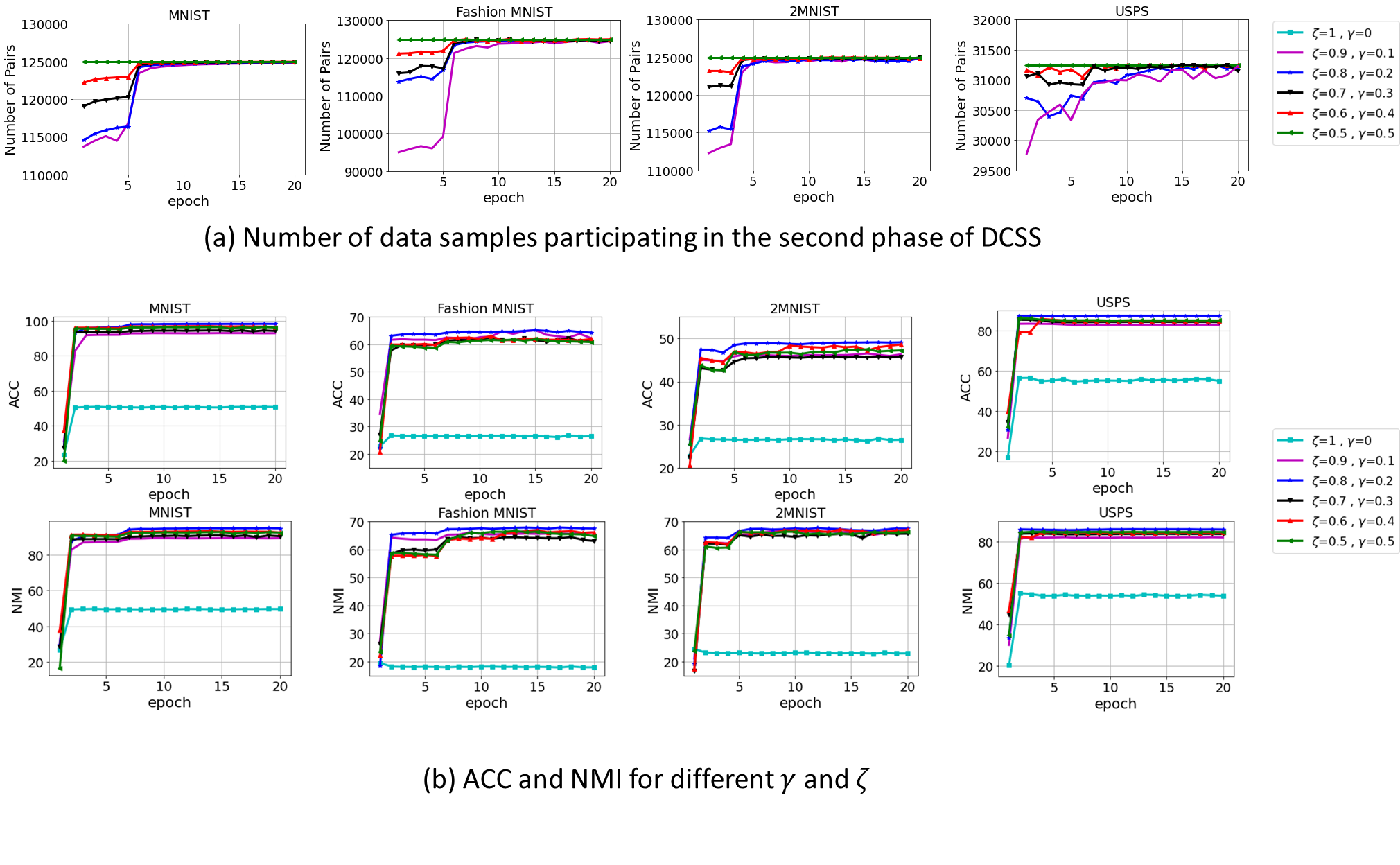}
  \caption{Changing hyperparameters $\zeta$ and $\gamma$ for different datasets. (a) Number of data pairs participating in the second phase of DCSS. \s{In the first epochs, MNet can barely recognize the relation between data points using soft group assignments defined in the $\mathbf{u}$ space. After completing the first $T_2=5$ epochs, we switch to measure similarities in the more reliable space $\mathbf{q}$; hence the number of participating pairs increases dramatically.  In the final epochs for different values for $\zeta$ and $\gamma$, almost all pairs contribute to the loss function defined in \eqref{eq5}. For the maximum ambiguity region 1, i.e. $\zeta = 1$ and $\gamma = 0$, all pairs do not participate in the second phase of DCSS and stay on the ambiguity region. Hence, 0 pairs contributes to this phase.} (b) Clustering performance in terms of ACC and NMI for different datasets for different values of $\zeta$ and $\gamma$; the clustering performance of DCSS is less sensitive to the choice of $\zeta$ and $\gamma$ in the range of [0.5,0.9] and [0.1,0.5], respectively.}
\label{changing_u_l}
\end{figure*}

In Fig. \ref{hyperparameters}, we investigate the effect of different hyperparameters on DCSS clustering performance. For hyperparameters of the first phase (i.e. $\alpha$, $m$, and $T_1$), we report the performance of clustering using DCSS$_u$ (as is presented in Section \ref{ClusteringPerformance}).

In our proposed method, hyperparameters are fixed across all datasets, i.e. no fine-tuning is performed per dataset. Hence, one may obtain more accurate results by tuning the hyperparameters per dataset.

In Fig. \ref{hyperparameters}(a), we explore the importance of the centering loss in the first phase's loss function, shown in \eqref{new_eq1}, by changing $\alpha \in \{0,0.01,0.1,1\}$ for MNIST dataset. As is shown in this figure, by increasing the value of $\alpha$ from 0 to 0.1, our DCSS performance significantly enhanced in terms of ACC and NMI, which demonstrates the effectiveness of incorporating the centering loss beside the reconstruction loss in the first phase's loss function. We observed a similar trend across all the other datasets. In all our experiments, for all datasets, $\alpha$ is set to $0.1$. 

Fig. \ref{hyperparameters}(b) shows the impact of the level of fuzziness $m$ on the clustering performance of DCSS$_u$ for the Fashion MNIST dataset, where $m\in \{1.1,1.3,1.5,1.7\}$. In the case of m$\rightarrow$ 1 (m$\rightarrow \infty$), group membership vectors converge to one-hot (equal probability) vectors. As shown in this figure, as desired, the DCSS method is not too sensitive to m when it is set to a reasonable value. We observed a consistent pattern across all the remaining datasets. In all our experiments, for all datasets, m is set to $1.5$.

In Fig. \ref{hyperparameters}(c), we scrutinize the effect of update interval $T_1$ in the clustering performance of the first phase for $T_1\in \{2,5,10,15\}$. As is expected, better clustering performance in terms of ACC and NMI is acquired for a smaller value of $T_1$ for the USPS dataset. Consistently, the same behavior was observed across all remaining datasets. In our experiments, for all datasets, $T_1$ is set to 2. 

In Fig. \ref{hyperparameters}(d), we change the number of training epochs $T_2$, defined in Section \ref{2&3}, for Fashion MNIST dataset, where $T_2 \in \{1,5,10,20\}$. As is expected, for a very small $T_2$ value, e.g. $T_2 = 1$, where training the $\mathbf{q}$ space is mainly supervised by the $\mathbf{q}$ space itself even at the MNet training outset, DCSS cannot provide a proper $\mathbf{q}$ space, since $\mathbf{q}$ is not a sufficiently reliable space to be used for self-supervision. The figure also shows that for a very large $T_2$ value, e.g. $T_2=20$, when we only trust the $\mathbf{u}$ space for supervising the $\mathbf{q}$ space, we cannot train an effective $\mathbf{q}$ space. As shown, a good clustering performance can be obtained when $T_2$ is set to a moderate value. In our experiments, for all datasets, $T_2$ is set to $5$. This demonstrates the effectiveness of the proposed strategy in supervising the MNet training using both the $\mathbf{u}$ and $\mathbf{q}$ spaces.

In Fig. \ref{changing_u_l}, we change $\zeta$ and $\gamma$ in range [0,1], where $\zeta+\gamma=1$, to observe model convergence and accuracy for different lengths of the ambiguity interval, defined as $\zeta-\gamma$, ranging from 1 (when $\zeta=1$) to 0 (when $\zeta=0.5$). Fig. \ref{changing_u_l}(a) shows the number of pairs participating in minimizing the loss functions defined in \eqref{eq4} and \eqref{eq5}. As can be seen, at the beginning of the second phase, our model can make a decisive decision only about a few pairs, and the remaining pairs are in the ambiguous region. As the second phase of the training process progresses, more and more pairs are included in the loss functions optimization process. Finally, at the end of the second phase, almost all pairs contribute to the training.

Furthermore, in Fig. \ref{changing_u_l}(b), we investigate the influence of $\zeta$ and $\gamma$ in clustering performance. As it can be seen, as is desired, the final clustering performance of our DCSS framework is not highly sensitive to the choice of $\zeta$ and $\gamma$ when these are set to reasonable values. In all our experiments, $\zeta=0.8$ and $\gamma = 0.2$ for all experiments and datasets.

\begin{figure*}[h]
  \centering
  \includegraphics[width=0.45\linewidth]{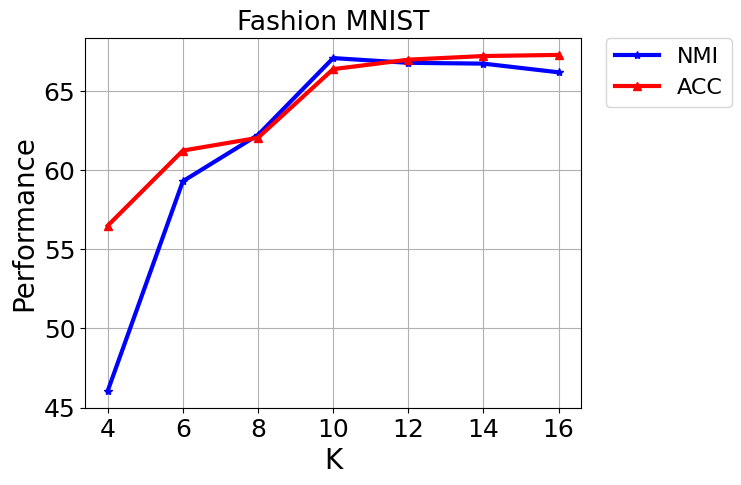}
  \caption{Sensitivity of DCSS model to the initial number of clusters (K).}
\label{k_exp}
\end{figure*}

In Figure \ref{k_exp}, we investigate the impact of initializing the number of clusters to values smaller or larger than the actual number of clusters on the performance of our DCSS model using the Fashion MNIST dataset, where the true number of clusters is 10. When the initial number of clusters (e.g., K = 4) is smaller than the actual number, the model is forced to group data samples from multiple clusters into a single cluster, resulting in a decrease in the DCSS model's performance in terms of ACC and NMI. Conversely, with higher values of K (e.g., K = 16), the algorithm may tend to over-segment the data, creating smaller clusters that may not well align with the true structure of the data. In an extreme scenario where K is set to a very large value, the over-segmenting of the data may cause overfitting to the training data. Employing a validation set becomes valuable when conducting a parameter sweep to identify an appropriate value for K. See \cite{kodinariya2013review, milligan1985examination} for more discussion on how to have a prior estimation for the number of clusters.    

\subsection{Features visualization}
In order to investigate the effectiveness of our model in extracting useful features for different datasets, we train a deep neural network with the same structure as is presented in Section 1 of the supplementary material file in a \emph{supervised} manner, and then we compare the output of the first convolutional layers for the trained model and our proposed DCSS model. As it can be seen in Fig. \ref{Filter}, our DCSS learns a variety of low- and high-frequency features, which are similar to features learned in a supervised manner. This demonstrates the effectiveness of our framework in finding informative features in an unsupervised manner.

\begin{figure}[H]
  \centering
  \includegraphics[width=0.8\linewidth]{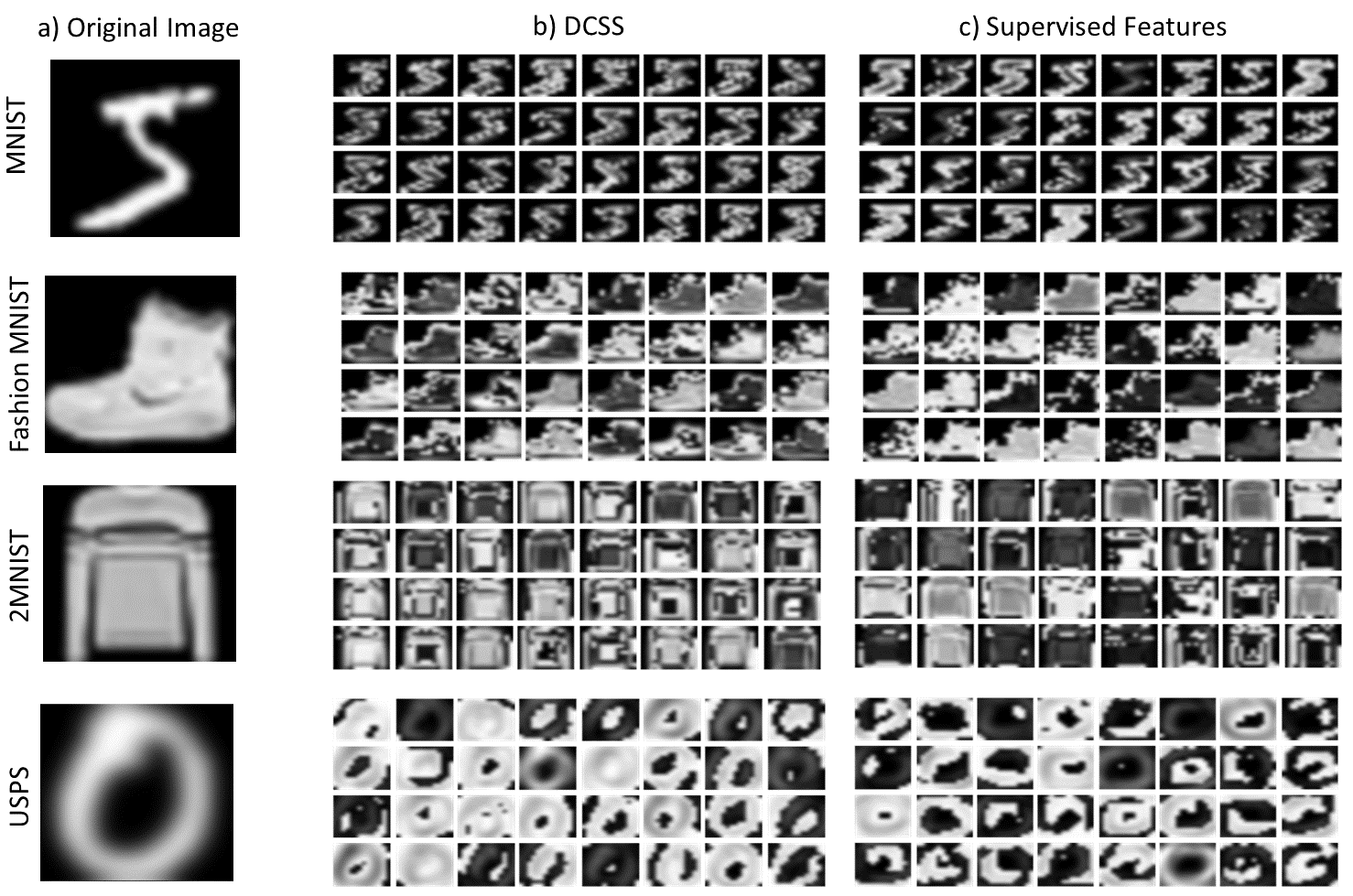}
  \caption{(a) samples of MNIST, Fashion MNIST, 2MNIST, and USPS. The output of the first convolutional layer using (b) the unsupervised DCSS method and (c) a supervised manner employing the same network structure as of DCSS is shown in Section 1 of the supplementary material file. }
\label{Filter}
\end{figure}

\section{Conclusion}\label{conclusion}

 In this paper, we present a novel, effective, and practical method for data clustering employing the novel concept of self-supervision with pairwise similarities. Despite most clustering methods, the proposed DCSS algorithm employs soft cluster assignments in its loss function, optimizes cluster-specific losses, and takes advantage of the relevant information available in the sample pairs. The proposed algorithm is shown to perform well in practice compared to previous state-of-the-art clustering algorithms. We also show that the DCSS's self-supervision approach can be employed as a general approach to improve the performance of state-of-the-art AE-based clustering methods. 

\bibliography{sn-bibliography}

\end{document}